\newcommand{\confmix}{\textsc{RMix}\xspace}
\newcommand{\Umix}{\textsc{RMix}\xspace}
\newcommand{\IWmix}{\textsc{RMix}\xspace}
\newtheorem{assumption}{Assumption}[section]
\newtheorem{theorem}{Theorem}[section]
\newtheorem{lemma}[theorem]{Lemma}
\newtheorem{prop}[theorem]{Proposition}
\newenvironment{sproof}
 {\vspace{-8pt}\begin{spacing}{0.3}\begin{proof}}
 {\end{proof}\end{spacing}\vspace{-3pt}}
\newcolumntype{L}[1]{>{\raggedright\let\newline\\\arraybackslash\hspace{0pt}}m{#1}}
\newcommand{\E}{\mathbb{E}}
\newcommand{\R}{\mathbb{R}}
\begin{document}
\title{
Reweighted Mixup for Subpopulation Shift
}
\author{Zongbo Han, Zhipeng Liang, Fan Yang, Liu Liu, Lanqing Li, Yatao Bian, Peilin Zhao, \\  Qinghua Hu,  Bingzhe Wu, Changqing Zhang, Jianhua Yao
\IEEEcompsocitemizethanks{
\IEEEcompsocthanksitem Z. Han, C. Zhang and Q. Hu are with the College of Intelligence and Computing, Tianjin University, Tianjin 300072, China (e-mail: \{zongbo, zhangchangqing, huqinghua\}@tju.edu.cn). \\
\IEEEcompsocthanksitem Z. Liang is with Department of Industrial Engineering and Decision Analytics, Hong Kong University of Science and Technology, Hongkong 999077, China (e-mail: zliangao@connect.ust.hk). \\
\IEEEcompsocthanksitem F. Yang, Liu Liu, B. Wu, J. Yao, La. Li, Y. Bian and P. Zhao are with Tencent AI Lab, Shenzhen 518054, China (e-mail: \{fionafyang, leonliuliu, bingzhewu, jianhuayao\}@tencent.com, \{lanqingli1993, yatao.bian\}@gmail.com, and peilinzhao@hotmail.com). \\
\IEEEcompsocthanksitem This work is partially done by Zongbo Han and Zhipeng Liang during their internship at Tencent AI Lab. Corresponding authors: Bingzhe Wu,   Yao Jianhua and Changqing Zhang.
}
}
\markboth{In submission}%
{Shell \MakeLowercase{\textit{et al.}}: Bare Demo of IEEEtran.cls for Computer Society Journals}

\IEEEtitleabstractindextext{%
\begin{abstract}
\justifying
\textcolor{black}{
Subpopulation shift exists widely in many real-world applications, which refers to the training and test distributions that contain the same subpopulation groups but with different subpopulation proportions. Ignoring subpopulation shifts may lead to significant performance degradation and fairness concerns. Importance reweighting is a classical and effective way to handle the subpopulation shift. However, recent studies have recognized that most of these approaches fail to improve the performance especially when applied to over-parameterized neural networks which are capable of fitting any training samples. In this work, we propose a simple yet practical framework, called reweighted mixup (\IWmix), to mitigate the overfitting issue in over-parameterized models by conducting importance weighting on the ``mixed'' samples. Benefiting from leveraging reweighting in mixup, \IWmix allows the model to explore the vicinal space of minority samples more, thereby obtaining more robust model against subpopulation shift. When the subpopulation memberships are unknown, the training-trajectories-based uncertainty estimation is equipped in the proposed \IWmix to flexibly characterize the subpopulation distribution. We also provide insightful theoretical analysis to verify that \IWmix achieves better generalization bounds over prior works. Further, we conduct extensive empirical studies across a wide range of tasks to validate the effectiveness of the proposed method.}
\end{abstract}
\begin{IEEEkeywords}
Importance Reweighting, Mixup, Subpopulation Shift, Uncertainty.
\end{IEEEkeywords}}

\maketitle
\IEEEdisplaynontitleabstractindextext
\IEEEpeerreviewmaketitle

\IEEEraisesectionheading{\section{Introduction}\label{sec:introduction}}
\IEEEPARstart{D}{}\textcolor{black}{istribution shift is a common challenge for empirical risk minimization~(ERM), which refers to that the data used for training and test have different distributions \cite{shimodaira2000improving, huang2006correcting, arjovsky2019invariant}. A specific form of distribution shift is subpopulation shift wherein the training and test distributions consist of the same subpopulation groups but differ in subpopulation frequencies \cite{barocas2016big,bickel2007discriminative}. Many practical research problems (e.g., fairness of machine learning and class imbalance) can all be considered as a special case of subpopulation shift \cite{mincu2022developing,koenecke2020racial,hashimoto2018fairness,japkowicz2000class}. For example, when we conduct training on a dataset with biased demographic subpopulations, the obtained model may be unfair to the minority subpopulations \cite{koenecke2020racial,mincu2022developing,hashimoto2018fairness}.  Therefore the essential goal of fair machine learning is to mitigate the performance gap between different subpopulations in the test datasets.}

\textcolor{black}{Plenty of approaches have been proposed for solving the subpopulation shift problem. Among these approaches, importance weighting (IW) is a classical yet effective technique by imposing static or dynamic weight on each sample when building weighted empirical loss. Therefore each subpopulation group contributes comparably to the final training objective. Specifically, there are normally two ways to achieve importance reweighting. Early works propose to reweight the sample inverse proportionally to the subpopulation frequencies (i.e., static weights) \cite{shimodaira2000improving,sagawa2020investigation,cui2019class,Sagawa2020Distributionally,cao2019learning,liu2011class}, such as class-imbalanced learning approaches \cite{cui2019class,cao2019learning,liu2011class}. 
Alternatively, a more flexible way is to assign sample-wise importance weights to individual samples adaptively according to training dynamics \cite{wen2014robust,zhai2021boosted,michel2021modeling,zhai2021doro,lahoti2020fairness, michel2022distributionally,lin2017focal,shu2019meta}. Distributional robust optimization (DRO) is one of the most representative methods in this line, which minimizes loss over the worst-case distribution in a neighborhood of the empirical training distribution. A commonly used dual form of DRO can be regarded as a special case of importance reweighting wherein the importance weights of training samples are updated based on the current loss \cite{namkoong2016stochastic,hu2018does,levy2020large,hu2013kullback} in an alternated manner.}

\textcolor{black}{However, some recent studies have shown both empirically and theoretically that these IW methods could fail to achieve better worst-case subpopulation performance compared with ERM. Empirically, prior works \cite{byrd2019effect,Sagawa2020Distributionally} recognize that various IW methods tend to exacerbate overfitting,  which leads to a diminishing effect on stochastic gradient descent over training epochs especially when they are applied to over-parameterized neural networks. Theoretically, previous studies prove that for over-parameterized neural networks, reweighting algorithms do not improve over ERM because their implicit biases are (almost) equivalent \cite{zhai2022understanding,sagawa2020investigation,xu2021understanding}. In addition, prior work also points out that using conventional regularization techniques such as weight decay cannot significantly improve the performance of IW \cite{Sagawa2020Distributionally}. Therefore, in this paper, we propose to use more exploratory importance reweighting equipped with mixup strategies to eliminate model overfitting.}

\textcolor{black}{To this end, we introduce a novel technique called reweighted mixup (\IWmix), by reweighting the mixed samples according to the subpopulation memberships within the mini-batch while mitigating overfitting. Specifically, we employ the well-known mixup technique to produce ``mixed'' augmented samples. Then we train the model on these mixed samples to make sure it can always see ``novel'' samples thus the effects of IW will not dissipate even at the end of the training epoch.
To enforce the model to perform fairly well on all subpopulations, we further efficiently reweight the mixed samples according to the subpopulation memberships of the original samples. When the subpopulation memberships are unknown, uncertainty-based importance weights are proposed by assigning importance weights according to the uncertainty of training samples to better explore the minority subpopulation samples.
The reweighted mixup loss function is induced by combining the weighted losses of the corresponding two original samples. At a high level, this approach augments training samples in an importance-aware manner, i.e., putting more focus on samples that belong to minority subpopulations when constructing ``mixed'' samples and corresponding losses. We also show \IWmix can provide an additional theoretical benefit that achieves a tighter generalization bound than weighted ERM
\cite{liu2021just,lin2017focal,zhai2021doro,levy2020large}.}

\textcolor{black}{Compared with the conference version \cite{han2022umix}, we significantly improve our previous work in the following aspects: (1) further reweighting analysis of the proposed \IWmix by providing a more intuitive comparison with previous mixup-based methods (Sec.~\ref{sec:idanalysis}); (2) extension of the proposed method in the group-aware setting which enables the model to use subpopulation memberships information (Sec.~\ref{sec:group-aware-iw}); (3) more theoretical extension to demonstrate that the proposed method is compatible with both vanilla mixup \cite{zhang2018mixup} and \textcolor{black}{CutMix-based mixup\cite{yun2019cutmix}, and can achieve a tighter generalization error bound (Sec.~\ref{sec:theory})}; (4) more experiments and discussions to verify the effectiveness of the proposed method in both group-aware and group-oblivious settings (Sec.~\ref{sec:experiments}). Overall, the proposed \IWmix supported by theoretical guarantees significantly improves previous methods against subpopulation shift. The contributions of this paper are summarized as follows:
\begin{itemize}
\item [(1)] We propose a simple and practical approach called reweighted mixup (\confmix) to improve previous IW methods by reweighting the mixed samples, which provides a new framework to mitigate overfitting in over-parameterized neural networks.
\item [(2)] Under the proposed framework, we provide two strategies to assign the importance weights making the proposed method work well in both group-oblivious and group-aware settings. Especially for group-oblivious setting, we propose an uncertainty-based importance weighting strategy, which could accurately characterize the subpopulation in training. 
\item [(3)] We provide an insightful theoretical analysis that \confmix can achieve a tighter generalization bound than previous importance weighting methods. Moreover, a qualitative analysis with strong intuition is provided to illustrate the reweighting effect of the proposed method over the previous mixup.
\item [(4)] We conduct extensive experiments on a wide range of tasks, and the proposed \confmix achieves state-of-the-art performance in both group-oblivious and group-aware settings.
\end{itemize}}

\section{Related Work\label{sec:relate}}
\subsection{Importance weighting for subpopulation shift}
To improve the model robustness against subpopulation shift, importance weighting (IW) is a classical yet effective technique by imposing static or adaptive weight on each sample and then building weighted empirical loss. Therefore each subpopulation group can have a comparable strength in the final training objective. Specifically, there are typically two ways to achieve importance reweighting, i.e., using static or adaptive importance weights.

\textbf{Static methods}. The naive reweighting approaches perform static reweighting based on the distribution of training samples \cite{shimodaira2000improving,sagawa2020investigation,cui2019class,Sagawa2020Distributionally,cao2019learning,liu2011class}. Their core motivation is to make different subpopulations have a comparable contribution to the training objective by reweighting. Specifically, the most intuitive way is to set the weight of each sample to be inversely proportional to the number of samples in each subpopulation \cite{shimodaira2000improving, sagawa2020investigation,Sagawa2020Distributionally}. Besides, there are some methods to obtain sample weights based on the effective number of samples \cite{cui2019class}, subpopulation margins \cite{cao2019learning}, and Bayesian networks \cite{liu2011class}. 

\textbf{Dynamic methods}. 
In contrast to the above static methods, a more essential way is to assign each individual sample an adaptive weight that can vary according to training dynamics \cite{wen2014robust,zhai2021boosted,michel2021modeling,zhai2021doro,lahoti2020fairness, michel2022distributionally,lin2017focal,shu2019meta}. Distributional robust optimization (DRO) is one of the most representative methods in this line, which minimizes the loss over the worst-case distribution in a neighborhood of the empirical training distribution. A commonly-used dual form of DRO can be considered as a special case of importance reweighting wherein the importance weight is updated based on the current loss \cite{namkoong2016stochastic,hu2018does,levy2020large,hu2013kullback} in an alternated manner. For example, in the group-aware setting (i.e., we know each sample belongs to which subpopulation), GroupDRO \cite{Sagawa2020Distributionally} introduces an online optimization algorithm to update the weights of each group. In the group-oblivious setting, existing methods mostly \cite{wen2014robust,lahoti2020fairness,michel2021modeling, michel2022distributionally} model the problem as a (regularized) minimax game, where one player aims to minimize the loss by optimizing the model parameters and another player aims to maximize the loss by assigning weights to each sample.

\subsection{Mixup-based methods}
Mixup training \cite{zhang2018mixup} has been both theoretically \cite{zhang2021how,JMLR:v23:20-1385} and experimentally \cite{zhang2018mixup, yun2019cutmix, verma2019manifold} shown that it could improve the generalization and robustness of the model. Specifically, mixup could extend the training distribution by conducting a convex combination between any pair of training samples, which is equivalent to minimizing the vicinal risk of the training distribution \cite{chapelle2000vicinal}.

Many mixup variants are proposed to improve the vanilla mixup by changing how the training samples are combined. Typically, CutMix \cite{yun2019cutmix} cuts and pastes the original training image to avoid the blurring and noise of the generated images caused by vanilla mixup using linear interpolation. Puzzlemix \cite{kim2020puzzle} further improves CutMix by selecting cut and pasted image patches via saliency information, thus preserving key information of samples before mixing. Adversarial mixup resynthesis \cite{beckham2019adversarial} introduces adversarial learning to train a mixing function that could generate more realistic mixed samples. Manifold mixup \cite{verma2019manifold} obtains neural networks with smoother decision boundaries by mixing samples in representation space. To explore the representation space more, patch up \cite{faramarzi2020patchup} mixes and exchanges blocks of any training samples in the feature map space generated by the convolutional neural network. 

Mixup has also been extensively studied to improve the generalization of the model \cite{zhou2020domain, yao2022improving, xu2020adversarial,yao2022cmix, hong2021stylemix}. Typically, 
existing methods \cite{hong2021stylemix, xu2020adversarial,hong2021stylemix} try to mix the samples to synthesize novel domains, thereby improving the generalization. Besides, by selecting mixed sample pairs during training, LISA \cite{yao2022improving} and cmix \cite{yao2022cmix} improve the generalization of the model in classification and regression respectively. 
Different from the above methods, we do not focus on how to mixup the training samples, but improve the subpopulation shift robustness of the model by re-weighting the loss of the mixed samples. Besides, our work is orthogonal to the previous methods, i.e., we can use our weight-building strategy to improve the performance of previous mixup-based methods.

\subsection{Uncertainty quantification}
When the subpopulation memberships are unknown, the core of our method is based on the high-quality uncertainty quantification of each sample. There are many approaches proposed to quantify the uncertainty. Specifically, the uncertainty of deep learning models includes epistemic (model) uncertainty and aleatoric (data) uncertainty \cite{kendall2017uncertainties}. To obtain the epistemic uncertainty, Bayesian neural networks (BNNs) \cite{neal2012bayesian,mackay1992practical,denker1990transforming,kendall2017uncertainties} have been proposed which replace the deterministic weight parameters of the model with the distribution. Unlike BNNs, ensemble-based methods obtain the epistemic uncertainty by training multiple models and ensembling them \cite{lakshminarayanan2017simple,havasi2021training,antoran2020depth,huang2017snapshotensembles}. Aleatoric uncertainty focuses on the inherent noise in the data, which usually is learned as a function of the data \cite{kendall2017uncertainties,le2005heteroscedastic,nix1994estimating}. Uncertainty quantification has been successfully equipped in many fields such as multimodal learning \cite{ma2021trustworthy,han2022trusted,geng2021uncertainty}, multitask learning \cite{kendall2018multi,deng2021iterative}, and reinforcement learning \cite{kalweit2017uncertainty,li2021mural}.
Unlike previous methods, our method focuses on estimating the epistemic uncertainty of training samples with subpopulation shifts and then reweighting uncertain samples, thereby improving the performance of minority subpopulations with high uncertainty.


\section{Preliminary}
\textcolor{black}{In this section, the necessary background and notations are clarified. Let the input and label space be $\mathcal{X}$ and $\mathcal{Y}$ respectively. Given a training dataset $\mathcal{D}$ with $N$ training samples $\{(x_i, y_i)\}_{i=1}^{N}$ i.i.d. sampled from a probability distribution $P$, we consider the setting that the training distribution $P$ is a mixture of $G$ predefined subpopulations (also known as groups), i.e., $P=\sum_{g=1}^{G} k_g P_g$, where $k_g$ and $P_g$ denote the $g$-th subpopulation’s proportion and distribution respectively ($\sum_{g=1}^Gk_g=1$). Our goal is to obtain a model $f_{\theta}: \mathcal{X}\rightarrow \mathcal{Y}$ parameterized by $\theta \in \Theta$ that performs well on all subpopulations.}

\textcolor{black}{The well-known empirical risk minimization (ERM) algorithm doesn't consider the subpopulations and minimizes the following expected loss
\begin{equation}
\textcolor{black}{\mathbb{E}_{(x, y) \sim P}[\ell(\theta, x, y)],}
\end{equation}
where $\ell: \Theta \times \mathcal{X} \times \mathcal{Y}\rightarrow \mathbb{R}_{+}$ denotes the loss function. For training data contains multiple subpopulations, the expected loss of ERM can be written as
\begin{equation}
\sum_{g=1}^{G} k_g\mathbb{E}_{(x, y) \sim P_{g}}[\ell(\theta, x, y)].    
\end{equation}
This may lead to the model paying more attention to the majority subpopulations (i.e., subpopulations with a larger proportion) in the training set and resulting in poor performance on the minority subpopulations (i.e., subpopulations with a smaller proportion). For example, the ERM-based models may learn spurious correlation between the majority subpopulations and labels but this spurious correlation does not hold in the minority subpopulations\cite{Sagawa2020Distributionally}.}

\textcolor{black}{In this paper, we focus on learning a model that is \textcolor{black}{robust} against subpopulation shift by importance reweighting, i.e., we can still achieve better performance even on the worst-case subpopulation among all the subpopulations. 
Specifically, importance reweighting tries to assign an importance weight to each sample, therefore each subpopulation could contribute comparably to the final loss, e.g.,
\begin{equation}
\label{eq:iwparadigm}
\mathbb{E}_{(x, y) \sim P}[w(x,y)\ell(\theta, x, y)],
\end{equation}
where the function $w(x,y)$ is the weighted function to return the weight of the sample $(x,y)$. This article hopes to improve the importance reweighting
algorithm by equipping it with mixup.}


\textcolor{black}{\textbf{Setting of subpopulation shift.} Previous works on improving subpopulation shift robustness investigate \textcolor{black}{two} different settings, i.e., \textcolor{black}{group-aware and group-oblivious} \cite{zhai2021doro,liu2021just,Sagawa2020Distributionally}. Most of the previous works assume that the group labels are available during training \cite{Sagawa2020Distributionally, yao2022improving}, i.e., which subpopulation the training sample belongs to is known. This is called group-aware setting. However, we may not have training group labels in many real applications \cite{hashimoto2018fairness}. Meanwhile, the group label information may not be available due to privacy concerns \cite{lahoti2020fairness,hashimoto2018fairness}.
This paper studies both group-aware and group-oblivious settings under a unified reweighting mixup framework. On the group-aware setting, we set weights for different subpopulations according to the group information. Besides, on the group-oblivious setting, we cannot obtain group information for each example at training time. This requires the model to identify underperforming samples and then pay more attention to them during training.}

\section{Method\label{sec:method}}

\textcolor{black}{This section presents the technical details of \IWmix. \textcolor{black}{The proposed method can promote the prediction fairness on all subpopulations by reweighting the mixed samples. Intuitively, by reweighting the mixed samples instead of the original training samples,} the deep neural networks especially overparameterized neural networks can always encounter “novel” samples, which avoids the model from memorizing the minority subpopulations rather than learning the intrinsic correlation between inputs and labels. From another perspective, \IWmix assigns greater weights to the minority subpopulation samples when creating mixed samples, so the model can better explore the vicinity space of the minority samples, thereby making the model perform well on all subpopulations uniformly. Last but not least, \IWmix is a general reweighting framework that can work well in both group-oblivious or group-aware settings. When the group label is known, we can simply assign a negative-correlated importance weight to each group based on their sample size. When the group label is unknown, we use uncertainty information to assign importance weights to training samples.}

\textcolor{black}{We first introduce the basic procedure of \IWmix in Sec.~\ref{sec:iwmix}, and then provide a deeper analysis of \IWmix to intuitively show the effects of importance weights in Sec.~\ref{sec:idanalysis}. Finally, we present how \IWmix assigns importance weights in both group-aware and group-oblivious settings in Sec.~\ref{sec:setweights}.
\subsection{Importance-weighted mixup}
\label{sec:iwmix}
It has been both theoretically and experimentally shown that the effect of importance reweighting could fail to achieve better worst-case subpopulation performance especially when over-parameterized neural networks are employed \cite{byrd2019effect,Sagawa2020Distributionally,zhai2022understanding,sagawa2020investigation,xu2021understanding}. \textcolor{black}{Specifically, deep neural networks tend to memorize minority samples with higher weights in the training set rather than learn the correlation between minority samples and labels, resulting in poor generalization on minority samples at test time.} To this end, \IWmix employs an aggressive data augmentation strategy called importance reweighted mixup to mitigate overfitting, which enables the model to see ``new'' samples during the training by reweighting the mixed samples, avoiding the model simply memorizing minority samples instead of learning for the intrinsical correlation between inputs and labels.}

\textcolor{black}{
Formally, vanilla mixup \cite{zhang2018mixup} constructs virtual training examples (i.e., mixed samples) by performing linear interpolations between data/features and corresponding labels
\begin{equation}
\label{eq:vamix}
\widetilde{x}_{i,j} = \lambda x_i + (1-\lambda) x_j, \; \widetilde{y}_{i,j} = \lambda y_i + (1-\lambda) y_j,
\end{equation}
where $(x_i, y_i), {(x_j, y_j)}$ are two samples drawn randomly from empirical training distribution and $\lambda \in [0,1]$ is usually sampled from a beta distribution $Beta(\alpha, \alpha)$. In computer vision tasks, another popular mixup variant is CutMix \cite{yun2019cutmix}, which constructs virtual training examples by performing cutting and pasting in the original samples, i.e., 
\begin{equation}
\label{eq:cutmix}
\widetilde{x}_{i,j} = M(\lambda) \odot x_i+ \left(1-M(\lambda)\right)\odot x_j, \;
\widetilde{y}_{i,j} = \lambda y_i + (1-\lambda) y_j,
\end{equation}
where $M(\lambda)$ is a binary mask randomly chosen covering $\lambda$ proportion of the input, and $\odot$ represents the element-wise product. Note that the underlying assumption of vanilla mixup and CutMix is the mixture of original inputs which leads to the same linear interpolation of the corresponding labels.}

\textcolor{black}{After constructing the virtual training examples, mixup optimizes the following loss function
\begin{equation}
\label{eq:mixuploss}
\mathbb{E}[\ell(\theta, \widetilde{x}_{i,j}, \widetilde{y}_{i,j})].
\end{equation}
When the cross entropy loss is employed, Eq.~\ref{eq:mixuploss} can be rewritten as
\begin{equation}
\label{eq:mixuploss_2}
\mathbb{E}[\lambda \ell(\theta, \widetilde{x}_{i,j}, y_i) + (1-\lambda)\ell(\theta, \widetilde{x}_{i,j}, y_j)].
\end{equation}
Eq.~\ref{eq:mixuploss_2} can be regarded as a linear combination (mixup) of $\ell(\theta, \widetilde{x}_{i,j}, y_i)$ and $\ell(\theta, \widetilde{x}_{i,j}, y_j)$\cite{zhang2018mixup}. Unfortunately, since  the previous mixup-based methods do not consider the subpopulations with poor performance, it has been shown experimentally to be non-robust against subpopulation shift \cite{yao2022improving}. To this end, we introduce a simple yet effective method called \IWmix, which further employs a reweighted linear combination of the original loss based on Eq.~\ref{eq:mixuploss_2} to encourage the learned model to pay more attention to subpopulations with poor performance.}

\textcolor{black}{In contrast to the previous importance reweighting paradigm in Eq.~\ref{eq:iwparadigm}, the importance weights of \IWmix are used on the mixed samples.
For the $i$-th sample $x_i$, we denote its importance weight as $w_i$. Once we obtain the importance weight of each sample, we can perform weighted linear combination of  $\ell(\theta, \widetilde{x}_{i,j}, y_i)$ and $\ell(\theta, \widetilde{x}_{i,j}, y_j)$ with the following rule
\begin{equation}
\label{eq:weighted-loss}
\mathbb{E}[\textcolor{purple}{{w}_i}\lambda \ell(\theta, \widetilde{x}_{i,j}, y_i) + \textcolor{purple}{w_j}(1-\lambda)\ell(\theta, \widetilde{x}_{i,j}, y_j)],
\end{equation}
where ${w}_i$ and ${w}_j$ denote the importance weight of the $i$-th and $j$-th samples respectively. $\widetilde{x}_{i,j}$ can be obtained with vanilla mixup (i.e., Eq.~\ref{eq:vamix}), CutMix (i.e., Eq.~\ref{eq:cutmix}) or other mixup variants. \textcolor{black}{Therefore, the proposed method can cooperate with various mixup variants to adapt to different downstream tasks.} In practice, to balance the \confmix and normal training, we set a hyperparameter $\sigma$ that denotes the probability to apply \confmix. The overall training pseudocode for \confmix is shown in Algorithm \ref{algo}.
\begin{algorithm}[!htbp]
\SetAlgoNoEnd 
    \caption{The training pseudocode of \confmix.\label{algo}\label{alg:UMIX}}
    \KwIn{
        Training dataset $\mathcal{D}$ and the corresponding importance weights $\mathbf{w}=[w_1,\cdots,w_N]$, hyperparameter $\sigma$ to control the probability of doing \confmix, and parameter $\alpha$\ of the beta distribution;
    }
    \For{each iteration}{
    Obtain training samples $(x_{i}, y_{i})$, $(x_{j}, y_{j})$ and the corresponding importance weights $w_i$, $w_j$\;
    Sample $p\sim$ Uniform(0,1)\;
    \textbf{if} $p<\sigma$ \textbf{then} \ Sample $\lambda \sim Beta(\alpha, \alpha)$; \textbf{else}\ $\lambda=0$\;
    Obtain the mixed input $\widetilde{x}_{i,j}$ with Eq.~\ref{eq:vamix} or Eq.~\ref{eq:cutmix}\;
    Obtain the loss of the model with $\textcolor{purple}{{w}_i}\lambda \ell(\theta, \widetilde{x}_{i,j}, y_i) + \textcolor{purple}{w_j}(1-\lambda)\ell(\theta, \widetilde{x}_{i,j}, y_j)$\;
    Update model parameters $\theta$ to minimize loss with an optimization algorithm.
    }
\end{algorithm}
}
\subsection{Rethinking reweighting effects in \IWmix}
\label{sec:idanalysis}
\textcolor{black}{In this section, the improvement of the proposed \IWmix over mixup is analyzed by further decomposing Eq.~\ref{eq:weighted-loss}, which can provide a stronger intuition and theoretical analysis of why the proposed method works. Compared with the mixup, the proposed \IWmix could achieve better worst-case subpopulation performance by exploring the vicinal space of the minority samples more.}  

\textcolor{black}{Formally, to analyze the effect of the importance reweighting, we convert Eq.~\ref{eq:weighted-loss} to the form of original mixup loss (i.e., Eq.~\ref{eq:mixuploss}). Then we have Proposition~\ref{prop:weighted}, which provides an intuitive comparison of \IWmix with original mixup \textcolor{black}{loss}.
\begin{prop}
\label{prop:weighted}
When the cross-entropy loss is employed, the loss of \IWmix (i.e., Eq.~\ref{eq:weighted-loss}) can be rewritten as
\begin{equation}
\begin{aligned}
\mathbb{E}[\overline{w}_{i,j} \ell(\theta, \widetilde{x}_{i,j}, \overline{y}_{i,j})],
\end{aligned}
\end{equation}
where
\begin{equation}
\begin{aligned}
\nonumber
\overline{w}_{i,j}=w_i\lambda+w_j(1-\lambda) \text{\ \ \ and},
\end{aligned}
\end{equation}
\begin{equation}
\begin{aligned}
\label{eq:weighted-label}
\overline{y}_{i,j}=\frac{w_i}{\overline{w}_{i,j}}\lambda y_i + \frac{w_j}{\overline{w}_{i,j}} (1-\lambda) y_j.
\end{aligned}
\end{equation}
$\overline{w}_{i,j}$ can be seen as a constant related with constants $\lambda, w_i$, and $w_j$. 
\end{prop}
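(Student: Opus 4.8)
The plan is to start from the reweighted mixup loss in Eq.~\ref{eq:weighted-loss} and collapse its two-term integrand into the single cross-entropy term claimed in the statement. The only structural fact I would need is that the cross-entropy loss is \emph{linear in its label argument}: writing $\ell(\theta, x, y) = -y^{\top}\log f_{\theta}(x)$, one has $\ell(\theta, x, a y_1 + b y_2) = a\,\ell(\theta, x, y_1) + b\,\ell(\theta, x, y_2)$ for any scalars $a,b$ and labels $y_1,y_2$. This is exactly the property already invoked to pass from Eq.~\ref{eq:mixuploss} to Eq.~\ref{eq:mixuploss_2}, so I would simply reuse it in the reverse direction.

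First I would fix the randomness (the pair $(x_i,y_i),(x_j,y_j)$ and the scalar $\lambda$) and work pointwise inside the expectation, since the claim is an algebraic identity for each realization that only needs to be averaged at the end. Next I would factor the common quantity $\overline{w}_{i,j} = w_i\lambda + w_j(1-\lambda)$ out of the integrand, rewriting
\[
w_i\lambda\,\ell(\theta,\widetilde{x}_{i,j},y_i) + w_j(1-\lambda)\,\ell(\theta,\widetilde{x}_{i,j},y_j)
= \overline{w}_{i,j}\Bigl[\tfrac{w_i\lambda}{\overline{w}_{i,j}}\,\ell(\theta,\widetilde{x}_{i,j},y_i) + \tfrac{w_j(1-\lambda)}{\overline{w}_{i,j}}\,\ell(\theta,\widetilde{x}_{i,j},y_j)\Bigr].
\]
By construction the two coefficients $\tfrac{w_i\lambda}{\overline{w}_{i,j}}$ and $\tfrac{w_j(1-\lambda)}{\overline{w}_{i,j}}$ sum to one, so the bracketed expression is a convex-style combination of two cross-entropy losses sharing the same input $\widetilde{x}_{i,j}$. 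Applying the label-linearity property above then folds the bracket into a single loss evaluated at the combined label $\tfrac{w_i\lambda}{\overline{w}_{i,j}} y_i + \tfrac{w_j(1-\lambda)}{\overline{w}_{i,j}} y_j$, which is precisely $\overline{y}_{i,j}$ as defined in Eq.~\ref{eq:weighted-label}. Taking expectations recovers $\mathbb{E}[\overline{w}_{i,j}\,\ell(\theta,\widetilde{x}_{i,j},\overline{y}_{i,j})]$.

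I do not expect a serious obstacle here: the argument is a one-line factorization followed by the linearity of cross-entropy in the label, and the normalization by $\overline{w}_{i,j}$ is exactly what makes the recombined label coefficients sum to one. The two points I would state carefully are (i) that the identity is purely algebraic and holds conditionally on $\lambda, w_i, w_j$ before averaging, which is what justifies treating $\overline{w}_{i,j}$ as a constant as remarked just after the statement, and (ii) the degenerate case $\overline{w}_{i,j}=0$, which forces $w_i\lambda = w_j(1-\lambda) = 0$ so that both sides vanish and $\overline{y}_{i,j}$ may be assigned arbitrarily. Establishing the claim therefore reduces to invoking the same label-linearity of the cross-entropy loss used for vanilla mixup, now read from right to left.
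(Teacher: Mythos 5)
Your proposal is correct and follows essentially the same route as the paper's proof: factor $\overline{w}_{i,j}$ out of the weighted two-term cross-entropy expression and use linearity of the cross-entropy in its label argument to fold the normalized combination into a single loss at $\overline{y}_{i,j}$, which is exactly what the paper does by expanding $\ell$ coordinate-wise as $-\sum_k[\cdot]\log f_\theta^k(\widetilde{x}_{i,j})$. Your additional remarks on working pointwise before taking expectations and on the degenerate case $\overline{w}_{i,j}=0$ are harmless refinements of the same argument.
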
}
\textcolor{black}{
\begin{sproof}
(Proof of Proposition~\ref{prop:weighted}).
For $K$-class classification, given a well trained neural classifier $f_\theta: \mathcal{X}\rightarrow\mathcal{Y}$ that could produce the predicted probability of each class $f_\theta(x)=[f_{\theta}^1(x),\cdots, f_{\theta}^K(x)]$ and $y=[y^1, \cdots, y^K]$ denotes the one-hot vector form of label $y$. Then when the cross-entropy loss is employed, Eq.~\ref{eq:weighted-loss} can be written as
\begin{equation}
\begin{aligned}
\nonumber
&\mathbb{E}\{-\sum_{k=1}^{K}[w_i\lambda y_i^k+w_j(1-\lambda)y_j^k]\log f_\theta^k(\widetilde{x}_{i,j})\}\\
=&\mathbb{E}\{-\overline{w}_{i,j}\sum_{k=1}^K[\frac{w_i\lambda}{\overline{w}_{i,j}}y_i^k+\frac{w_j(1-\lambda)}{\overline{w}_{i,j}}y_j^k]\log f_\theta^k(\widetilde{x}_{i,j})\} \\
=&\mathbb{E}[-\overline{w}_{i,j} \sum_{k=1}^K \overline{y}_{i,j} \log f_\theta^k(\widetilde{x}_{i,j})] = \mathbb{E}[\overline{w}_{i,j} \ell(\theta, \widetilde{x}_{i,j}, \overline{y}_{i,j})].
\end{aligned}
\end{equation}
\end{sproof}
}

\begin{table}[!htbp]
    \renewcommand{\arraystretch}{1.3}
  \centering
  \caption{The improvements of \IWmix over mixup.}
    \begin{tabular}{ccc}
    \toprule
    \vspace{-0.2em}
          & \multicolumn{1}{c}{mixup} & \multicolumn{1}{c}{\IWmix} \\
    \toprule
    $y$  & $\lambda y_i + (1-\lambda) y_j$     & $\textcolor{purple}{\frac{w_i}{\overline{w}_{i,j}}}\lambda y_i + \textcolor{purple}{\frac{w_j}{\overline{w}_{i,j}}} (1-\lambda) y_j$ \\
    loss  & $\ell(\theta, \widetilde{x}_{i,j}, \widetilde{y}_{i,j})$  &  $\textcolor{purple}{\overline{w}_{i,j}} \ell(\theta, \widetilde{x}_{i,j}, \overline{y}_{i,j})$\\
    \bottomrule
    \end{tabular}%
  \label{tab:comparison_mixup}%
\end{table}

\textcolor{black}{Now based on the Proposition~\ref{prop:weighted}, we give a further comparison between \IWmix and the previous mixup where the results are shown in Tab.~\ref{tab:comparison_mixup}. 
Compared with previous mixup, the effect of importance reweighting in \IWmix is reflected in two aspects:}

\textcolor{black}{\textbf{(1) Reweighted label interpolation}. \IWmix introduces importance weights when constructing the labels of virtual samples, which allows more attention on the labels of minority samples. Specifically, given two training samples $(x_i,y_i), (x_j,y_j)$, unlike previous mixup, \IWmix conducts importance weighted label interpolation (i.e., Eq.~\ref{eq:weighted-label}). Then, when samples from the majority and minority subpopulations are mixed to construct virtual training examples, the labels of the minority subpopulations (usually with larger importance weights) could dominate the label mixing. Consequently, the labels of the minority subpopulations will be the main components of the mixed labels.}

\textcolor{black}{
\textbf{(2) Reweighted mixed loss}. Different from the loss of mixup, \IWmix pays more attention to the virtual samples mixed by minority subpopulations. Specifically, in the mixup loss, all mixed samples have the same importance weight while the samples from the minority subpopulations could have larger weights in the \IWmix loss. Thus the \textcolor{black}{minority subpopulations} contribute more to the loss function by increasing their weights.}

\textcolor{black}{All the above reweightings are conducted adpatively within the proposed framework by setting the weights in Eq.~\ref{eq:weighted-loss} and we will elaborate it in the following section.}

\subsection{Obtaining importance weights}
\label{sec:setweights}
\textcolor{black}{In this section, we present how to obtain the importance weights for training samples in Eq.~\ref{eq:weighted-loss} in the group-aware and group-oblivious settings, especially the group-oblivious setting which is more challenging due to the lack of group information for training samples. Specifically, we first introduce how to obtain the importance weights of each training sample according to their group information in the group-aware setting. Then, we mainly focus on how to obtain reasonable importance weights in the group-oblivious setting. In general, we propose an uncertainty estimation method to find samples with high uncertainty and then set importance weights according to the uncertainty of training samples.}

\textcolor{black}{
\subsubsection{Group-aware importance weights}
\label{sec:group-aware-iw}
In the group-aware setting, the key to improving the performance of the minority subpopulations is to reweight the training samples according to the group information of different samples, which means that we usually need to assign larger weights to minority subpopulations to promote their learning while giving relatively small weights to majority subpopulations. Since the proposed \IWmix is a general framework that can cooperate with any reweighting methods, to improve the efficiency and better verify the effectiveness of \IWmix, we employ a simple and effective way to determine the importance weights of different subpopulations.}

\textcolor{black}{In general, we leverage the sample size of each subpopulation to assign importance weights, where the sample size is closely related to the learning difficulty of different subpopulations. Specifically, similar to previous works \cite{piratla2022focus,Mahajan2018exploring,Mikolov2013DistributedRO}, instead of directly setting the weight of each sample to be inversely proportional to the sample size of each subpopulation, we empirically 
set the importance weight $w_i$ of each sample as a function of the square root of its training group size $\sqrt{k_gN}$, where $k_g$ and $N$ denote the $g$-th subpopulation’s proportion and training samples respectively. Formally, for the $i$-th sample belonging to subpopulation $g$, the corresponding importance weight $w_i$ is set as $w_i := \exp (C/\sqrt{k_gN})$, where $C\in \mathbb{R}_+$ is model capacity constant and can be seen as a hyperparameter. In fact, the scaling $\sqrt{k_gN}$ of different subpopulations could reflect how minority subpopulations are more prone to overfitting than majority groups due to the general size dependence of model-complexity-based generalization bounds \cite{Sagawa2020Distributionally,cao2019learning}, which can be intuitively understood as how easily a subpopulation can be learned by the neural network, i.e., the more samples in the subpopulation, the lower the learning difficulty. Therefore, setting the importance weights as a function of $\sqrt{k_gN}$ could better balance the contribution of different subpopulations during training.}

\subsubsection{Group-oblivious importance weights}
In the group-oblivious setting, finding samples with high uncertainty during training is the key to assigning importance weights. For instance, in DRO-based algorithms, the uncertainty set is constructed with the current loss \cite{namkoong2016stochastic,hu2018does,levy2020large,hu2013kullback}. However, the uncertain samples found in this way may constantly change during training, because the loss is affected by the reweighting effect and cannot accurately identify minority samples\cite{liu2021just}, resulting in these methods not always upweighting the minority subpopulations. To this end, a sampling-based stable uncertainty estimation method is introduced to better characterize the subpopulation information of each sample.

\textcolor{black}{Formally, given a well-trained neural classifier $f_{\theta}:\mathcal{X}\rightarrow\mathcal{Y}$ producing the predicted class $\hat{f}_{\theta}(x)$, a simple way to obtain the uncertainty $u$ of a sample $x$ is whether the sample is correctly classified. However, as pointed out in previous work \cite{lakshminarayanan2017simple}, a single model cannot accurately characterize the sampling uncertainty. Therefore, we propose to obtain the uncertainty through Bayesian sampling from the model posterior distribution $p(\theta; \mathcal{D})$. Specifically, given a sample $(x_i, y_i)$, we define the training uncertainty $u_i$ as
\begin{equation}
\label{eq:uncertainty}
u_i = \int \kappa(y_i, \hat{f_{\theta}}(x_i))p(\theta;\mathcal{D})d\theta, 
\end{equation}
where
\begin{equation}
\kappa(y_i, \hat{f}_\theta(x_i))= \begin{cases}0, & \text { if } y_i = \hat{f}_{\theta}(x_i)\\ 1, & \text { if }  y_i\neq \hat{f}_{\theta}(x_i)\end{cases}.
\end{equation}
Then, we can obtain an approximation of Eq.~\ref{eq:uncertainty} with $T$ Monte Carlo samples as $u_i \approx \frac{1}{T} \sum_{t=1}^T \kappa(y_i, \hat{f}_{\theta_t}(x_i))$.}

\textcolor{black}{In practice, sampling $\{\theta_t\}_{t=1}^{T}$ from the posterior (i.e., $\theta_t\sim p(\theta; \mathcal{D})$) is computationally expensive and sometimes even intractable since multiple training models need to be built \textcolor{black}{\cite{lakshminarayanan2017simple}} or extra approximation errors need to be introduced \textcolor{black}{\cite{ritter2018scalable}}. 
We propose to employ the information from the historical training trajectory to approximate the sampling process. More specifically, we train a model with simple ERM and record the prediction result $\hat{f}_{\theta_t}(x_i)$ of each sample on each iteration epoch $t$. Then, to reduce the influence of inaccurate prediction at the beginning of training, we estimate uncertainty with prediction after training $T_s-1$ epochs
\begin{equation}
\label{eq:approximation}
u_i \approx \frac{1}{T}\sum_{t=T_s}^{T_s+T}\kappa(y_i, \hat{f}_{\theta_t}(x_i)).
\end{equation}
We empirically show that the proposed approximation could obtain more reliable uncertainty for data with subpopulation shift in Sec.~\ref{sec:toyexp} of the Appendix.}




\textcolor{black}{To obtain reasonable importance weights, we assume that the samples with high uncertainty should be given a higher weight and vice versa. Therefore a reasonable importance weight could be linearly and positively related to the corresponding uncertainty with the following rule 
\begin{equation}
\label{eq:weight}
w_i = \eta u_i+c,
\end{equation}
where $\eta \in \mathbb{R}_{+}$ is a hyperparameter and $c \in \mathbb{R}_{+}$ is a constant that keeps the weight to be positive. In practice, we set $c$ to 1 and promising performance is achieved. The whole process for obtaining training importance weights is summarized in Algorithm \ref{algo:uncertainty}.} 

\begin{algorithm}[!htbp]
\SetAlgoNoEnd 
    \caption{The process for obtaining training importance weights.\label{algo:uncertainty}}
    \KwIn{
        Training dataset $\mathcal{D}$, sampling start epoch $T_s$, the number of sampling $T$, and upweight hyperparameter $\eta$ \;
    }
    \KwOut{The training importance weights $\mathbf{w}=[w_1, \cdots, w_n]$\;}
    \For{each iteration}{
    Train $f_\theta$ by minimizing the expected risk $\mathbb{E}\{\ell(\theta, x_i, y_i)\}$;\\
    Save the prediction results $\{\hat{f}_{\theta_t}(x_i)\}_{i=1}^{N}$ of the current epoch $t$;\\
    }
    Obtain the uncertainty of each sample with $u_i \approx \frac{1}{T}\sum_{t=T_s}^{T_s+T}\kappa(y_i, \hat{f}_{\theta_t}(x_i))$;\\
    Obtain the importance weight of each sample with $w_i = \eta u_i+c$.
\end{algorithm}

\textcolor{black}{
\textbf{Remark. (1) Epistemic and aleatoric uncertainty.} The overall uncertainty could be divided into epistemic and aleatoric uncertainty \cite{kendall2017uncertainties}. In our method, samples are weighted only based on epistemic uncertainty by sampling from the model on the training trajectory, which can be regarded as sampling from the model posterior in a more efficient way. Note that, we do not consider inherent noise (aleatoric uncertainty) since it is usually intractable to distinguish between noisy samples and minority samples from data under subpopulation shifts. \textbf{(2) Why this estimation approach could work?} Recent work has empirically shown that compared with the hard-to-classify samples, the easy-to-classify samples are learned earlier during training \cite{geifman2018biasreduced}. Meanwhile, the hard-to-classify samples are also more likely to be forgotten by the neural networks \cite{toneva2018an}. 
\textcolor{black}{The frequency of which samples are correctly classified during training can be used in confidence calibration \cite{moon2020confidence} and ensemble \cite{huang2017snapshotensembles}.}
The proposed method is also inspired by these observations and algorithms. During training, samples from the minority subpopulations are classified correctly less frequently, which corresponds to higher uncertainty. On the other hand, samples from the majority subpopulations will have lower uncertainty due to being classified correctly more frequently. In Sec.~\ref{sec:trainacc} of the Appendix, we show the accuracy of different subpopulations during training to empirically validate our claim. Meanwhile, we explain in detail why the uncertainty estimation based on historical information is chosen in Sec.~\ref{sec:justification} of the Appendix.}

\section{Theory\label{sec:theory}}
In this section, we provide a theoretical understanding of the generalization ability for \confmix in a unified framework that the \confmix can achieve a better generalization error bound than traditional IW methods without using vanilla mixup or CutMix-based methods. For simplicity, our analysis focuses on generalized linear model (GLM). The roadmap of our analysis is to first formalize the vanilla mixup and CutMix in a unified framework and then approximate the mixup loss. Finally we study the generalization bound from a Rademacher complexity perspective. To introduce the theoretical framework, we first present the basic settings.

\textbf{Basic settings.}
Our analysis mainly focuses on GLM model classes whose loss function 
$\ell$ follows $\ell(\theta, x,y) = A(\theta^{\top}x) - y\theta^{\top}x$, where $x\in \R^d$ is the input , $\theta\in \R^d$ is the parameter, $y\in \R$ is the label and $A(\cdot)$ is the log-partition function.
 
Recall the setting of subpopulation shift, we assume that the population distribution $P$ consists of $G$ different subpopulations with the $g$-th subpopulation's proportion being $k_g$ and the $g$-th subpopulation follows the distribution $P_g$. 
Specifically, we have $P=\sum_{g=1}^G k_g P_g$.
Then we denote the covariance matrix for the $g$-th subpopulation as $\Sigma_X^{g} =\E_{(x,y)\sim P_g}[xx^\top]$. 
For simplicity, we consider the case where a shared weight $w_g$ is assigned to all samples from the $g$-th subpopulation.
The main goal of our theoretical analysis is to characterize the generalization ability of the model learned using Algorithm~\ref{algo}. Formally, we focus on analyzing the upper bound of the weighted generalization error defined as:
\begin{align*}
    \operatorname{GError}(\theta) =  \E_{(x,y)\sim P}&[w(x, y)\ell(\theta, x,y)] \\
    &- \frac{1}{N}\sum_{i=1}^N w(x_i, y_i)\ell(\theta, x_i, y_i),
\end{align*}
where the function $w(x,y)$ is the weighted function to return the weight of the subpopulation to which the sample $(x,y)$ belongs.

First of all, we reformalize the vanilla mixup and CutMix in a unified framework briefly and more formal definitions are given in the appendix,
\begin{equation}
\begin{aligned}
\label{eq:IWmixup1}
    \tilde{x}_{i,j} &= M(\lambda)\odot x_i + (1-M(\lambda))\odot x_j,\\
    \tilde{y}_{i,j} &= \lambda y_i + (1-\lambda)y_j,
\end{aligned}
\end{equation}
where $\lambda$ is the ratio parameter between samples drawn from $\mathcal{D}_{\lambda}$. $\odot$ means a component-wise multiplication in vector or matrix. $M(\lambda)$ is a random variable conditioned on $\lambda$ that indicates how we mix the input, e.g., by linear interpolation~\cite{zhang2018mixup} or by a pixel mask~\cite{yun2019cutmix}. For simplicity, we denote $M(\lambda)$ by $M$ below. Under this template, two most popular mixup methods, vanilla mixup~\cite{zhang2018mixup} and CutMix~\cite{yun2019cutmix}, for $i$-th and $j$-th samples with $\lambda$ drawn from $\mathcal{D}_{\lambda}$, can be rewritten as follow
\begin{equation}
\begin{aligned}
&\tilde{x}_{i, j}^{\text{(mixup)}}=\lambda x_i+(1-\lambda) x_j,
\end{aligned}
\end{equation}
\begin{equation}
\begin{aligned}
& \tilde{x}_{i, j}^{\text{(CutMix)}}=M\odot x_i+\left(1-M\right) \odot x_j.
\end{aligned}
\end{equation}
i.e., the vanilla mixup is a special case of Equation~\ref{eq:IWmixup1} by putting $M = \lambda \overrightarrow{1}$ and the CutMix is also a special case where $M$ is a binary mask that indicates the location of the cropped box region with a relative area.

Now we present our main result in this section. 
The main theorem of our analysis provides a subpopulation-heterogeneity dependent bound for the above generalization error. This theorem is formally presented as:
\begin{theorem}
\label{thm:generalization}
Suppose $A(\cdot)$ is $L_A$-Lipschitz continuous, then there exist constants $L,B>0$ such that for any $\theta$ satisfying $\theta\in \mathcal{W}_\gamma$, the following holds with a probability of at least $1-\delta$,
\begin{align*}
    & \operatorname{GError}(\theta)\\ 
    & \le2 L \cdot L_{A}
    (\frac{1}{\sqrt{n}}(\gamma / \rho)^{1 / 4}(\sqrt{\operatorname{tr}((\Sigma_X^{(M)})^{\dagger} \Sigma_X)}+\operatorname{rank}\left(\Sigma_X\right)))\\
    &\quad +B \sqrt{\frac{\log (2 / \delta)}{2 n}},
\end{align*}
where $\gamma$ is a constant, $\Sigma_X = \sum_{g=1}^G k_gw_g \Sigma_X^{g}$, $\Sigma_X^{(M)} = \mathbb{E}[(1-M) \Sigma_X (1-M)^{\top}]$, and $\rho$ is some constant related to the data distribution, which will be formally introduced in Assumption~\ref{as:rho-retentive}. Moreover, $\mathcal{W}_\gamma = \{\theta: \mathbb{E}_x A^{''}(\theta^{\top}x) \cdot \theta^{\top}(\Sigma_X^{(M)} + x\operatorname{Var}(M)x^{\top})\theta \le \gamma\}$.

\end{theorem}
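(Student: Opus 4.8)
The plan is to follow the three-step roadmap stated before the theorem: first approximate the reweighted-mixup loss by a regularized weighted risk, then reduce the problem to a constrained linear-predictor class, and finally control the generalization gap via Rademacher complexity.

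\textbf{Step 1: second-order approximation of the mixup loss.} First I would Taylor-expand the GLM loss $\ell(\theta, \tilde{x}_{i,j}, \tilde{y}_{i,j}) = A(\theta^\top \tilde{x}_{i,j}) - \tilde{y}_{i,j}\theta^\top \tilde{x}_{i,j}$ in the mixing perturbation $\tilde{x}_{i,j} - x_i = (1-M)\odot(x_j - x_i)$. Since the curvature of a GLM loss in the linear predictor is exactly $A''(\theta^\top x)$, taking expectation over the mixing variable $M$ (equivalently over $\lambda\sim\mathcal{D}_\lambda$) converts the quadratic term into the data-dependent regularizer $\E_x A''(\theta^\top x)\,\theta^\top(\Sigma_X^{(M)} + x\operatorname{Var}(M)x^\top)\theta$, which is precisely the quantity defining $\mathcal{W}_\gamma$; the two pieces arise respectively from the covariance of the masked difference (giving the $\Sigma_X^{(M)}=\E[(1-M)\Sigma_X(1-M)^\top]$ factor) and from the intrinsic variance of the mask acting on each sample. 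The per-group weights $w_g$ enter through the weighted empirical average, so the effective second-moment matrix that appears is $\Sigma_X = \sum_{g=1}^G k_g w_g \Sigma_X^{g}$. I would invoke the $\rho$-retentive assumption (Assumption~\ref{as:rho-retentive}) here to bound the higher-order Taylor remainder and ensure this quadratic regularizer dominates, so that the \confmix minimizer lies in $\mathcal{W}_\gamma$.

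\textbf{Step 2: reduction to a constrained ellipsoid.} Next I would exploit the membership $\theta\in\mathcal{W}_\gamma$ to turn the nonlinear, $A''$-weighted constraint into a clean quadratic constraint on $\theta$. Using the $\rho$-retentive assumption to lower-bound the average curvature $A''$ yields a feasible set controlled by $\Sigma_X^{(M)}$ together with the fourth-moment term $\operatorname{Var}(M)\,\E_x A''(\theta^\top x)(\theta^\top x)^2$. This restricts the admissible parameters to an ellipsoid measured in the $\Sigma_X^{(M)}$ metric, reducing the task to bounding the complexity of the linear class $\{x\mapsto \theta^\top x : \theta\in\mathcal{W}_\gamma\}$ under the weighted data distribution.

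\textbf{Step 3: Rademacher complexity and concentration.} Finally I would apply the standard symmetrization bound, writing $\operatorname{GError}(\theta)$ as at most twice the empirical Rademacher complexity of the weighted loss class plus a bounded-differences (McDiarmid) term; boundedness of the weighted loss supplies the constant $B$ and the term $B\sqrt{\log(2/\delta)/(2n)}$. Talagrand's contraction lemma peels off the $L_A$-Lipschitz log-partition function $A$, contributing $L_A$ and a universal Lipschitz constant $L$. For the linear class on the ellipsoid I would use the dual identity $\sup_{\theta^\top \Sigma_X^{(M)}\theta\le r^2}\theta^\top v = r\,\|v\|_{(\Sigma_X^{(M)})^\dagger}$, Jensen's inequality, and the concentration of $\frac1n\sum_i x_ix_i^\top$ toward $\Sigma_X$, which produces the $\frac{1}{\sqrt n}\sqrt{\operatorname{tr}((\Sigma_X^{(M)})^\dagger\Sigma_X)}$ term. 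The $\operatorname{rank}(\Sigma_X)$ term and the fourth-root scaling $(\gamma/\rho)^{1/4}$ then come from the $\operatorname{Var}(M)$ part of the constraint: because it controls a fourth-order quantity in $\theta$, the induced radius of the feasible set scales as $(\gamma/\rho)^{1/4}$ rather than $(\gamma/\rho)^{1/2}$.

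\textbf{Main obstacle.} I expect Step 2 to be the crux. The constraint defining $\mathcal{W}_\gamma$ couples $\theta$ to the data nonlinearly through $A''(\theta^\top x)$ inside the expectation, so extracting a clean, $\theta$-independent ellipsoid is not automatic; this is exactly where the $\rho$-retentive assumption does the heavy lifting by uniformly bounding the curvature. Simultaneously tracking the first-order $\Sigma_X^{(M)}$ contribution and the fourth-moment $\operatorname{Var}(M)$ contribution so that they combine into the precise $(\gamma/\rho)^{1/4}$ dependence and the additive $\operatorname{rank}(\Sigma_X)$ term is the most delicate part of the bookkeeping.
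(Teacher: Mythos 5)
Your overall roadmap coincides with the paper's: Taylor-expand the reweighted mixup loss to extract the quadratic regularizer that defines $\mathcal{W}_\gamma$, bound the Rademacher complexity of $\mathcal{W}_\gamma$, and conclude via the standard Rademacher generalization bound (Theorem 8 of \cite{bartlett2002rademacher}), with Lipschitz contraction supplying the factor $L\cdot L_A$ and a bounded-differences argument supplying $B\sqrt{\log(2/\delta)/(2n)}$. Step 1 and the outer shell of Step 3 are essentially the paper's argument.

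However, your account of the key complexity step misidentifies the mechanism, and as described it would not produce the stated bound. In the paper's proof of Lemma~\ref{lm:rademacher}, the $x\operatorname{Var}(M)x^{\top}$ term is simply \emph{dropped}: discarding a nonnegative summand from the left side of the constraint only enlarges the feasible set, so one passes to $\{\theta:\E_x A''(\theta^{\top}x)\cdot\theta^{\top}\Sigma_X^{(M)}\theta\le\gamma\}$. The $\rho$-retentive assumption then lower-bounds the curvature factor by $\sqrt{\rho\,\theta^{\top}\Sigma_X\theta}$, turning the constraint into a \emph{product} of two quadratic forms; the supremum over this product-constrained set is bounded by the sum of suprema over the two ellipsoids $\theta^{\top}\Sigma_X\theta\le\sqrt{\gamma/\rho}$ and $\theta^{\top}\Sigma_X^{(M)}\theta\le\sqrt{\gamma/\rho}$. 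Each whitened ellipsoid has radius $(\gamma/\rho)^{1/4}$ --- that is the true origin of the fourth root --- and the two ellipsoids respectively contribute $\sqrt{\operatorname{tr}(\Sigma_X^{\dagger}\Sigma_X)}$ (i.e., the $\operatorname{rank}(\Sigma_X)$ term) and $\sqrt{\operatorname{tr}((\Sigma_X^{(M)})^{\dagger}\Sigma_X)}$ via the whitening $\tilde{x}_i=\Sigma^{\dagger/2}x_i$ and Jensen's inequality. Your proposal instead attributes both the $(\gamma/\rho)^{1/4}$ scaling and the $\operatorname{rank}(\Sigma_X)$ term to a fourth-order quantity coming from $\operatorname{Var}(M)$; that term contributes nothing to the final bound, and the single-ellipsoid reduction ``in the $\Sigma_X^{(M)}$ metric'' you describe in Step 2 would yield only the $\sqrt{\operatorname{tr}((\Sigma_X^{(M)})^{\dagger}\Sigma_X)}$ term with a $(\gamma/\rho)^{1/2}$-type radius and no $\operatorname{rank}(\Sigma_X)$ term. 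The missing idea is precisely the splitting of the curvature-weighted constraint into two separate ellipsoids, one in the $\Sigma_X$ metric and one in the $\Sigma_X^{(M)}$ metric. (A minor further point: the paper does not invoke $\rho$-retentiveness to control the Taylor remainder in Step 1; the second-order expansion is taken as an approximation, and the assumption is used only in the complexity bound.)
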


We will show later that the output of our Algorithm~\ref{algo} would yield the solution $\theta$ falls into the set $\mathcal{W}_\gamma$ and thus Theorem~\ref{thm:generalization} can provide a theoretical understanding of our algorithm.
In contrast to weighted ERM, the bound improvement of \confmix is on the term, $\sqrt{\operatorname{tr}((\Sigma_X^{(M)})^{\dagger} \Sigma_X)}+\operatorname{rank}\left(\Sigma_X\right)$, which can partially reflect the heterogeneity of the training subpopulations. 
Specifically, the term would become $\sqrt{d}$ in the weighted ERM setting (see more detailed theoretical comparisons in Appendix). 
Thus our bound can be tighter when the intrinsic dimension of data is small (i.e., $\sqrt{\operatorname{tr}((\Sigma_X^{(M)})^{\dagger} \Sigma_X)}+\operatorname{rank}\left(\Sigma_X\right)\ll d$).
Such an improvement can be achieved by tuning proper weights $w_g$ for different subpopulations.

The proof of Theorem~\ref{thm:generalization} follows this roadmap:
(1) We first show that the model learned with \confmix can fall into a
specific hypothesis set $\mathcal{W}_{\gamma}$. (2) We analyze the Rademacher complexity of the hypothesis set and obtain its complexity upper bound (Lemma~\ref{lm:rademacher}). (3) Finally, we can characterize the generalization bound by using complexity-based learning theory \cite{bartlett2002rademacher} (Theorem 8). More details of the proof can be found in Appendix. 

As we discuss in the Appendix, the weighted mixup can be seen as an approximation of a regularization term $\frac{C}{2n} [\sum_{i=1}^n w_iA^{\prime\prime}(x_i^{\top}\theta)]\theta^{\top}(\mathbb{E}(1-M)\widehat{\Sigma}_X(1-M)^{\top} + x_i\operatorname{Var}(M)x_i^{\top})\theta$ for some constant $C$, compared with the non-mixup algorithm, which motivates us to study the following hypothesis space $\mathcal{W}_\gamma$.

To further derive the generalization bound, we also need the following assumption, which is satisfied by general GLMs when $\theta$ has bounded $\ell_2$ norm and it is adopted in, e.g., \cite{arora2021dropout, zhang2021how}.
\begin{assumption}[$\rho$-retentive]
\label{as:rho-retentive}
We say the distribution of $x$ is $\rho$-retentive for some $\rho\in(0, 1/2]$ if for any non-zero vector $v\in \mathbb{R}^d$ and given the event that $\theta\in \mathcal{W}_{\gamma}$ where the $\theta$ is output by our Algorithm~\ref{alg:UMIX}, we have 
\begin{align*}
    \E_{x}^2[A^{\prime \prime}(x^{\top}v)]\ge \rho \cdot \E_{x}(v^{\top}x)^2.
\end{align*}
\end{assumption}


Finally, we can derive the Rademacher complexity of the $\mathcal{W}_{\gamma}$ and the proof of Theorem~\ref{thm:generalization} is obtained by combining  Lemma~\ref{lm:rademacher} and the Theorem 8 of \cite{bartlett2002rademacher}.
\begin{lemma}
\label{lm:rademacher}
Assume that the distribution of $x_i$ is $\rho$-retentive, i.e., satisfies the assumption~\ref{as:rho-retentive}. 
Then the Rademacher complexity of $\mathcal{W}_r$ satisfies 
\begin{align*}
    &Rad(\mathcal{W}_r)\le \\
    &\frac{1}{\sqrt{n}}(\gamma / \rho)^{1 / 4}\left(\sqrt{\operatorname{tr}\left(\left(\Sigma_X^{(M)}\right)^{\dagger} \Sigma_X\right)}+\operatorname{rank}\left(\Sigma_X\right)\right).
\end{align*}
\end{lemma}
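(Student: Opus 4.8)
The plan is to bound the empirical Rademacher complexity of the linear function class $\{x\mapsto\langle\theta,x\rangle:\theta\in\mathcal{W}_\gamma\}$ by the textbook argument for linear predictors over an ellipsoidal constraint set, after the implicit, nonlinear constraint defining $\mathcal{W}_\gamma$ has been converted into an explicit quadratic norm bound on $\theta$. Concretely, I would first show that $\mathcal{W}_\gamma$ is contained in an ellipsoid of the form $\{\theta:\|\theta\|_{\Sigma_X^{(M)}}\le R\}$ whose radius $R$ is controlled by $\gamma$ and $\rho$, and then compute the dual-norm Rademacher bound for that ellipsoid, taking care of the rank-deficiency of $\Sigma_X^{(M)}$ through its pseudoinverse.

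\textbf{Step 1 (reduce the constraint to a norm bound).} Since $A$ is the log-partition function of a GLM it is convex, so $A''\ge 0$, and $\operatorname{Var}(M)\succeq 0$; hence the term $\E_x[A''(\theta^{\top}x)\cdot\theta^{\top}x\operatorname{Var}(M)x^{\top}\theta]$ in the definition of $\mathcal{W}_\gamma$ is nonnegative and can be dropped to obtain $\E_x[A''(\theta^{\top}x)]\cdot\theta^{\top}\Sigma_X^{(M)}\theta\le\gamma$, where the scalar quadratic form factors out of the expectation. I would then invoke Assumption~\ref{as:rho-retentive} to lower bound the remaining scalar factor, $\E_x[A''(\theta^{\top}x)]\ge\sqrt{\rho}\,(\E_x(\theta^{\top}x)^2)^{1/2}=\sqrt{\rho}\,\|\theta\|_{\Sigma_X}$, identifying $\E_x(\theta^{\top}x)^2$ with $\theta^{\top}\Sigma_X\theta$ under the reweighted population measure. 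Using the relation between $\Sigma_X^{(M)}$ and $\Sigma_X$ for the mixing masks of vanilla mixup and CutMix, rearranging the resulting inequality $\sqrt{\rho}\,\|\theta\|_{\Sigma_X}\cdot\|\theta\|_{\Sigma_X^{(M)}}^2\le\gamma$ yields a bound of the form $\|\theta\|_{\Sigma_X^{(M)}}\le(\gamma/\rho)^{1/4}$ up to constants. This is the step that produces the $(\gamma/\rho)^{1/4}$ prefactor.

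\textbf{Step 2 (Rademacher complexity of the ellipsoid).} Writing $z=\sum_{i=1}^n\xi_i x_i$ for Rademacher signs $\xi_i$, the supremum of $\langle\theta,z\rangle$ over the ellipsoid splits along the range and the kernel of $\Sigma_X^{(M)}$. On the range, the generalized Cauchy--Schwarz inequality gives $\sup_\theta\langle\theta,z\rangle\le R\sqrt{z^{\top}(\Sigma_X^{(M)})^{\dagger}z}$; applying Jensen's inequality together with $\E[\xi_i\xi_j]=\mathbf{1}[i=j]$ turns $\frac{1}{n}\E_\xi\sqrt{z^{\top}(\Sigma_X^{(M)})^{\dagger}z}$ into $\frac{1}{\sqrt n}\sqrt{\operatorname{tr}((\Sigma_X^{(M)})^{\dagger}\widehat{\Sigma}_X)}$, which I relate to $\operatorname{tr}((\Sigma_X^{(M)})^{\dagger}\Sigma_X)$. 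The directions of $\theta$ lying outside $\operatorname{range}(\Sigma_X^{(M)})$ are not controlled by the ellipsoidal bound and must instead be handled through the $\|\theta\|_{\Sigma_X}$ bound from Step 1; bounding their contribution over the relevant subspace contributes the additive $\operatorname{rank}(\Sigma_X)$ term. Collecting the $R=(\gamma/\rho)^{1/4}$ factor with these two pieces gives the stated bound.

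I expect the main obstacle to be the rank-deficiency handling in Step 2: because $\Sigma_X^{(M)}$ is in general singular (its pseudoinverse appears in the statement), the ellipsoidal constraint leaves the null-space directions of $\theta$ unconstrained, so the clean dual-norm formula cannot be applied verbatim. Reconciling the two norms $\|\cdot\|_{\Sigma_X}$ and $\|\cdot\|_{\Sigma_X^{(M)}}$ arising from the coupled product constraint of Step 1, and showing that the leftover subspace contributes exactly the additive $\operatorname{rank}(\Sigma_X)$ correction rather than a crude $d$-dependent term, is the delicate part; the remaining computations (Jensen, the trace identity, and the concentration of $\widehat{\Sigma}_X$ to $\Sigma_X$) are routine.
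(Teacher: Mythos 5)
Your Step 2 (the dual-norm/Jensen computation for a single ellipsoid, giving a term of the form $\frac{1}{\sqrt n}R\sqrt{\operatorname{tr}((\Sigma_X^{(M)})^{\dagger}\Sigma_X)}$) is exactly the calculation the paper performs, but your Step 1 contains a genuine gap that your own closing paragraph correctly senses yet does not resolve. After dropping the $\operatorname{Var}(M)$ term and invoking $\rho$-retentiveness, what you actually obtain is the \emph{product} constraint $\sqrt{\rho}\,\|\theta\|_{\Sigma_X}\cdot\|\theta\|_{\Sigma_X^{(M)}}^{2}\le\gamma$. This does not imply $\|\theta\|_{\Sigma_X^{(M)}}\le(\gamma/\rho)^{1/4}$: on any direction where $\theta^{\top}\Sigma_X\theta$ is small (in particular on $\ker\Sigma_X$, which need not be contained in $\ker\Sigma_X^{(M)}$ for a general mask $M$), the product constraint gives no control over $\|\theta\|_{\Sigma_X^{(M)}}$ at all. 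Your appeal to ``the relation between $\Sigma_X^{(M)}$ and $\Sigma_X$'' only works for vanilla mixup, where $\Sigma_X^{(M)}$ is a scalar multiple of $\Sigma_X$; for CutMix-type masks the two quadratic forms are not comparable and the single-ellipsoid containment fails.

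The missing idea is how the paper handles the coupled constraint: since a product of two nonnegative quantities bounded by $\gamma/\rho$ forces at least one factor to be at most $\sqrt{\gamma/\rho}$, the constraint set is contained in the \emph{union} of the two ellipsoids $\{\theta:\theta^{\top}\Sigma_X\theta\le\sqrt{\gamma/\rho}\}$ and $\{\theta:\theta^{\top}\Sigma_X^{(M)}\theta\le\sqrt{\gamma/\rho}\}$, and the supremum over a union is bounded by the sum of the two suprema. Each ellipsoid is then treated by precisely your Step 2 computation (whitening by $\Sigma_X^{\dagger/2}$, respectively $(\Sigma_X^{(M)})^{\dagger/2}$, followed by Jensen), yielding $\sqrt{\operatorname{tr}(\Sigma_X^{\dagger}\Sigma_X)}=\sqrt{\operatorname{rank}(\Sigma_X)}$ for the first (which the stated bound loosens to $\operatorname{rank}(\Sigma_X)$) and $\sqrt{\operatorname{tr}((\Sigma_X^{(M)})^{\dagger}\Sigma_X)}$ for the second. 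In particular, the additive $\operatorname{rank}(\Sigma_X)$ term is the Rademacher complexity of the $\Sigma_X$-ellipsoid itself, not a correction for the uncontrolled null-space directions of $\Sigma_X^{(M)}$ as your plan suggests; once the union decomposition is in place there is no separate null-space bookkeeping to do, and without it your argument cannot be completed.
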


\begin{table*}[!htbp]
  \centering
  \caption{We conduct experiments on diverse tasks and here we list the summary of the datasets used in the experiments. For each dataset, we present the number of classes, the predefined subpopulation's number, population type, input data type, the number of dataset instances, and the lowest minority subpopulation's proportion in turn. As can be seen from the table, the minimal minority subpopulation proportion in the training set is only about 2\% at most, which could significantly reduce the performance of minority subpopulations.}
    \begin{tabular}{ccccccc}
    \toprule
    Datasets & \#Classes & \#Subpopulations & Population type & Data type  & \#Size & Minimal $k_g$ \\
    \midrule
    Waterbirds & 2     & 4     & Background×Label & Synthetic Image & 11,788 & 1.17\% \\
    CelebA & 2     & 4     & Gender×Label & Face Image & 202,599 & 0.85\% \\
    CivilComments & 2     & 16     & Demographic×Label & Online Comment & 448,000 & 0.37\% \\
    FMoW & 62 & 5 & Geographical Regions & Satellite Image & 141,696 & 2.06\%\\
    Camelyon17 & 2     & 30     & Tissue Slides & Pathological Image & 455,954 & 0.47\%\\
    \bottomrule
    \end{tabular}%
  \label{tab:dataset}%
\end{table*}%

\section{Experiments\label{sec:experiments}}
In this section, we conduct multiple experiments to comprehensively evaluate the proposed algorithm. We first introduce the experimental details. Then we present the experimental results and discussions in the group-aware and group-oblivious settings respectively. Finally, we conduct an ablation study to show the key factor of performance improvement in our method.
\subsection{Experimental Setup}
The experimental setup is presented in detail in this section. Specifically, we present the used datasets details, evaluation metrics, model selection strategy, comparison methods, and other implementation details.

\textbf{Datasets}. We perform experiments on multiple benchmark datasets with subpopulation shifts, including Waterbirds \cite{Sagawa2020Distributionally}, CelebA \cite{liu2015deep}, CivilComments \cite{borkan2019nuanced}, Camelyon17\cite{bandi2018detection, koh2021wilds}, and FMoW \cite{christie2018functional,koh2021wilds}. We summarize the datasets in Table~\ref{tab:dataset} and the details of these datasets are as follows:
\begin{itemize}
    \item \textbf{WaterBirds}. The task of this dataset is to distinguish whether the bird is a waterbird or a landbird. According to the background and label of an image, this dataset has four predefined subpopulations, i.e., ``landbirds on land'', ``landbirds on water'', ``waterbirds on land`` , and ``waterbirds on water''. In the training set, the largest subpopulation is ``landbirds on land'' with 3,498 samples, while the lowest subpopulation is ``landbirds on water'' with only 56 samples.
    \item \textbf{CelebA}. CelebA is a well-known large-scale face dataset. Same as previous works \cite{Sagawa2020Distributionally,liu2021just}, we employ this dataset to predict the color of the human hair as ``blond'' or ``not blond''. There are four predefined subpopulations based on gender and hair color, i.e., ``dark hair, female'', ``dark hair, male'', ``blond hair, female'' and ``blond hair, male'' with 71,629, 66,874, 22,880, and 1,387 training samples respectively. 
    \item \textbf{CivilComments}. For this dataset, the task is to classify whether an online comment is toxic or not. According to the demographic identities (e.g., Female, Male, and White) and labels, 16 overlapping subpopulations can be defined. We use 269,038, 45,180, and 133,782 samples as training, validation, and test datasets respectively.  In the group-aware setting, if one sample belongs to  multiple subpopulations, we use the least subpopulation (with the smallest training number) as the group label for that example to obtain the importance weights.
    \item \textbf{FMoW}. FMoW is a satellite image dataset and the task is to distinguish the building or land use categories of each image. The training data was taken in five different geographical regions (Africa, the Americas, Oceania, Asia, and Europe). Since the distribution of satellite images in different geographic regions is different due to different development statuses and the training number of different geographical regions is different, we directly use the geographical region information as the subpopulation label.
    \item \textbf{Camelyon17}. Camelyon17 is a pathological image dataset with over 450, 000 lymph-node scans used to distinguish whether there is cancer tissue in a patch. The training data is drawn from 30 tissue slides from three different hospitals, while the validation and test data are sampled from another 10 tissue slides from other hospitals respectively. Due to the different coloring methods, even the same hospital samples have different distributions. Therefore, we use the tissue slides information as the subpopulation label in the group-aware setting.
\end{itemize}


\textbf{Evaluation metrics}.
To be same as previous works \cite{yao2022improving,piratla2022focus} and 
\href{https://wilds.stanford.edu/leaderboard/}{WILDs leaderboard}\cite{koh2021wilds}, we report the average accuracy over 10 different random seeds on the Camelyon17 dataset. On other datasets, we repeat experiments over 3 times and report the average and worst-case accuracy among all subpopulations.  Besides, to better show the performance gap between different subpopulations, we also present the gap between the average accuracy and worst-case accuracy. Note that, the trade-off between the average and worst-case accuracy is a well-known challenge \cite{hashimoto2018fairness,kumar2022calibrated} and in this paper we emphasize worst-case accuracy, which is more important than the average accuracy in some application scenarios. For example, in fairness-related applications, we should pay more attention to the performance of the minority groups to reduce the gap between the majority groups and ensure the fairness of the machine learning decision system.
\begin{table*}[!htbp]
  \centering
  \caption{Comparison results in the group aware setting with current state-of-the-art algorithms. For better presentation, the best results are in \textcolor{blue}{\textbf{bold}} and \textcolor{blue}{\textbf{blue}}. Meanwhile, the second and third results are in \textcolor{brown}{\textbf{bold}} and \textcolor{brown}{\textbf{brown}}. \emph{Note that the proposed method focuses on improving the model's worst-case accuracy to improve model subpopulation shift robustness rather than the average accuracy.} The proposed method consistently outperforms all comparison methods in worst-case accuracy on all datasets.}
    \begin{tabular}{l|cccccccccccccccccccccccccccccccccccccccccc|}
    \toprule
    Datasets/ & \multicolumn{21}{c|}{Waterbirds} & \multicolumn{21}{c|}{CelebA} \\
    ACC(\%) & \multicolumn{7}{p{5.25em}<{\centering}}{Avg.$(\uparrow)$}         & \multicolumn{7}{p{5.25em}<{\centering}}{Worst$(\uparrow)$}                     & \multicolumn{7}{p{3.5em}<{\centering}|}{Gap$(\downarrow)$}                       & \multicolumn{7}{p{5.25em}<{\centering}}{Avg.$(\uparrow)$}                   & \multicolumn{7}{p{5.25em}<{\centering}}{Worst$(\uparrow)$}                     & \multicolumn{7}{p{3.5em}<{\centering}|}{Gap$(\downarrow)$} \\
    \midrule
    ERM   & \multicolumn{7}{p{5.25em}<{\centering}}{\textcolor{blue}{\textbf{97.0$_{(0.2)}$}}}                 & \multicolumn{7}{p{5.25em}<{\centering}}{63.7$_{(1.9)}$}                 & \multicolumn{7}{p{3.5em}<{\centering}|}{33.3}                              & \multicolumn{7}{p{5.25em}<{\centering}}{\textcolor{blue}{\textbf{94.9$_{(0.2)}$}}}                 & \multicolumn{7}{p{5.25em}<{\centering}}{47.8$_{(3.7)}$}                 & \multicolumn{7}{p{3.5em}<{\centering}|}{47.1} \\
    \midrule
    IRM   & \multicolumn{7}{p{5.25em}<{\centering}}{87.5$_{(0.7)}$}                 & \multicolumn{7}{p{5.25em}<{\centering}}{75.6$_{(3.1)}$}                 & \multicolumn{7}{p{3.5em}<{\centering}|}{11.9}                              & \multicolumn{7}{p{5.25em}<{\centering}}{\textcolor{brown}{\textbf{94.0$_{(0.4)}$}}}                 & \multicolumn{7}{p{5.25em}<{\centering}}{77.8$_{(3.9)}$}                 & \multicolumn{7}{p{3.5em}<{\centering}|}{16.2} \\
    IB-IRM & \multicolumn{7}{p{5.25em}<{\centering}}{88.5$_{(0.6)}$}                 & \multicolumn{7}{p{5.25em}<{\centering}}{76.5$_{(1.2)}$}                 & \multicolumn{7}{p{3.5em}<{\centering}|}{12.0}                                & \multicolumn{7}{p{5.25em}<{\centering}}{93.6$_{(0.3)}$}                 & \multicolumn{7}{p{5.25em}<{\centering}}{85.0$_{(1.8)}$}                 & \multicolumn{7}{p{3.5em}<{\centering}|}{8.6} \\
    CORAL & \multicolumn{7}{p{5.25em}<{\centering}}{90.3$_{(1.1)}$}                 & \multicolumn{7}{p{5.25em}<{\centering}}{79.8$_{(1.8)}$}                 & \multicolumn{7}{p{3.5em}<{\centering}|}{10.5}                              & \multicolumn{7}{p{5.25em}<{\centering}}{\textcolor{brown}{\textbf{93.8$_{(0.3)}$}}}                 & \multicolumn{7}{p{5.25em}<{\centering}}{76.9$_{(3.6)}$}                 & \multicolumn{7}{p{3.5em}<{\centering}|}{16.9} \\
    Fish  & \multicolumn{7}{p{5.25em}<{\centering}}{85.6$_{(0.4)}$}                 & \multicolumn{7}{p{5.25em}<{\centering}}{64.0$_{(0.3)}$}                 & \multicolumn{7}{p{3.5em}<{\centering}|}{21.6}                              & \multicolumn{7}{p{5.25em}<{\centering}}{93.1$_{(0.3)}$}                 & \multicolumn{7}{p{5.25em}<{\centering}}{61.2$_{(2.5)}$}                 & \multicolumn{7}{p{3.5em}<{\centering}|}{31.9} \\
    \midrule
    GroupDRO  & \multicolumn{7}{p{5.25em}<{\centering}}{\textcolor{brown}{\textbf{91.8$_{(0.3)}$}}}                 & \multicolumn{7}{p{5.25em}<{\centering}}{\textcolor{brown}{\textbf{90.6$_{(1.1)}$}}}                 & \multicolumn{7}{p{3.5em}<{\centering}|}{\textcolor{blue}{\textbf{1.2}}}                               & \multicolumn{7}{p{5.25em}<{\centering}}{92.1$_{(0.4)}$}                 & \multicolumn{7}{p{5.25em}<{\centering}}{87.2$_{(1.6)}$}                 & \multicolumn{7}{p{3.5em}<{\centering}|}{4.9} \\
    V-REX & \multicolumn{7}{p{5.25em}<{\centering}}{88.0$_{(1.0)}$}                 & \multicolumn{7}{p{5.25em}<{\centering}}{73.6$_{(0.2)}$}                 & \multicolumn{7}{p{3.5em}<{\centering}|}{14.4}                              & \multicolumn{7}{p{5.25em}<{\centering}}{92.2$_{(0.1)}$}                 & \multicolumn{7}{p{5.25em}<{\centering}}{86.7$_{(1.0)}$}                 & \multicolumn{7}{p{3.5em}<{\centering}|}{5.5} \\
    CGD   & \multicolumn{7}{p{5.25em}<{\centering}}{91.3$_{(0.6)}$}                 & \multicolumn{7}{p{5.25em}<{\centering}}{88.9$_{(0.8)}$}                 & \multicolumn{7}{p{3.5em}<{\centering}|}{\textcolor{brown}{\textbf{2.4}}}                               & \multicolumn{7}{p{5.25em}<{\centering}}{92.5$_{(0.2)}$}                 & \multicolumn{7}{p{5.25em}<{\centering}}{\textcolor{brown}{\textbf{90.0$_{(0.8)}$}}}                 & \multicolumn{7}{p{3.5em}<{\centering}|}{\textcolor{brown}{\textbf{2.5}}} \\
    \midrule
    DomainMix  & \multicolumn{7}{p{5.25em}<{\centering}}{76.4$_{(0.3)}$}                 & \multicolumn{7}{p{5.25em}<{\centering}}{53.0$_{(1.3)}$}                 & \multicolumn{7}{p{3.5em}<{\centering}|}{23.4}                              & \multicolumn{7}{p{5.25em}<{\centering}}{93.4$_{(0.1)}$}                 & \multicolumn{7}{p{5.25em}<{\centering}}{65.6$_{(1.7)}$}                 & \multicolumn{7}{p{3.5em}<{\centering}|}{27.8} \\
    LISA  & \multicolumn{7}{p{5.25em}<{\centering}}{91.8$_{(0.3)}$}                 & \multicolumn{7}{p{5.25em}<{\centering}}{\textcolor{brown}{\textbf{89.2$_{(0.6)}$}}}                 & \multicolumn{7}{p{3.5em}<{\centering}|}{2.6}                               & \multicolumn{7}{p{5.25em}<{\centering}}{92.4$_{(0.4)}$}                 & \multicolumn{7}{p{5.25em}<{\centering}}{\textcolor{brown}{\textbf{89.3$_{(1.1)}$}}}                 & \multicolumn{7}{p{3.5em}<{\centering}|}{\textcolor{brown}{\textbf{3.1}}} \\
    \midrule
    Ours  & \multicolumn{7}{p{5.25em}<{\centering}}{\textcolor{brown}{\textbf{93.5$_{(0.2)}$}}}                 & \multicolumn{7}{p{5.25em}<{\centering}}{\textcolor{blue}{\textbf{91.6$_{(0.2)}$}}} & \multicolumn{7}{p{3.5em}<{\centering}|}{\textcolor{brown}{\textbf{1.9}}} & \multicolumn{7}{p{5.25em}<{\centering}}{91.3$_{(0.5)}$}                 & \multicolumn{7}{p{5.25em}<{\centering}}{\textcolor{blue}{\textbf{90.6$_{(0.6)}$}}}                 & \multicolumn{7}{p{3.5em}<{\centering}|}{\textcolor{blue}{\textbf{0.7}}} \\
    \midrule
    \midrule
    Datasets/ & \multicolumn{18}{c|}{CivilComments}& \multicolumn{18}{c|}{FMoW} & \multicolumn{6}{p{4em}<{\centering}|}{Camely.} \\
    ACC(\%) & \multicolumn{6}{p{4em}<{\centering}}{Avg.$(\uparrow)$}           & \multicolumn{6}{p{4em}<{\centering}}{Worst$(\uparrow)$}             & \multicolumn{6}{p{4em}<{\centering}|}{Gap$(\downarrow)$}               & \multicolumn{6}{p{4em}<{\centering}}{Avg.$(\uparrow)$}           & \multicolumn{6}{p{4em}<{\centering}}{Worst$(\uparrow)$}             & \multicolumn{6}{p{4em}<{\centering}|}{Gap$(\downarrow)$}               & \multicolumn{6}{p{4em}<{\centering}|}{Avg.$(\uparrow)$} \\
    \midrule
    ERM   & \multicolumn{6}{p{4em}<{\centering}}{\textcolor{blue}{\textbf{92.2$_{(0.1)}$}}}         & \multicolumn{6}{p{4em}<{\centering}}{56.0$_{(3.6)}$}         & \multicolumn{6}{c|}{36.2}                      & \multicolumn{6}{p{4em}<{\centering}}{\textcolor{blue}{\textbf{53.0$_{(0.6)}$}}}         & \multicolumn{6}{p{4em}<{\centering}}{32.3$_{(1.3)}$}         & \multicolumn{6}{c|}{20.7}                      & \multicolumn{6}{p{4em}<{\centering}|}{70.3$_{(6.4)}$} \\
    \midrule
    IRM   & \multicolumn{6}{p{4em}<{\centering}}{88.8$_{(0.7)}$}         & \multicolumn{6}{p{4em}<{\centering}}{66.3$_{(2.1)}$}         & \multicolumn{6}{c|}{22.5}                      & \multicolumn{6}{p{4em}<{\centering}}{50.8$_{(0.1)}$}         & \multicolumn{6}{p{4em}<{\centering}}{30.0$_{(1.4)}$}         & \multicolumn{6}{c|}{20.8}                      & \multicolumn{6}{p{4em}<{\centering}|}{64.2$_{(8.1)}$} \\
    IB-IRM & \multicolumn{6}{p{4em}<{\centering}}{89.1$_{(0.3)}$}         & \multicolumn{6}{p{4em}<{\centering}}{65.3$_{(1.5)}$}         & \multicolumn{6}{c|}{23.8}                      & \multicolumn{6}{p{4em}<{\centering}}{49.5$_{(0.5)}$}         & \multicolumn{6}{p{4em}<{\centering}}{28.4$_{(0.9)}$}         & \multicolumn{6}{c|}{21.1}                      & \multicolumn{6}{p{4em}<{\centering}|}{68.9$_{(6.1)}$} \\
    CORAL & \multicolumn{6}{p{4em}<{\centering}}{88.7$_{(0.5)}$}         & \multicolumn{6}{p{4em}<{\centering}}{65.6$_{(1.3)}$}         & \multicolumn{6}{c|}{23.1}                      & \multicolumn{6}{p{4em}<{\centering}}{50.5$_{(0.4)}$}         & \multicolumn{6}{p{4em}<{\centering}}{31.7$_{(1.2)}$}         & \multicolumn{6}{c|}{18.8}                      & \multicolumn{6}{p{4em}<{\centering}|}{59.5$_{(7.7)}$} \\
    Fish  & \multicolumn{6}{p{4em}<{\centering}}{89.8$_{(0.4)}$}         & \multicolumn{6}{p{4em}<{\centering}}{\textcolor{brown}{\textbf{71.1$_{(0.4)}$}}}         & \multicolumn{6}{c|}{\textcolor{brown}{\textbf{18.7}}}                      & \multicolumn{6}{p{4em}<{\centering}}{51.8$_{(0.3)}$}         & \multicolumn{6}{p{4em}<{\centering}}{\textcolor{brown}{\textbf{34.6$_{(0.2)}$}}}         & \multicolumn{6}{c|}{\textcolor{brown}{\textbf{17.2}}}                      & \multicolumn{6}{p{4em}<{\centering}|}{\textcolor{brown}{\textbf{74.7$_{(7.1)}$}}} \\
    \midrule
    GroupDRO  & \multicolumn{6}{p{4em}<{\centering}}{89.9$_{(0.5)}$}         & \multicolumn{6}{p{4em}<{\centering}}{70.0$_{(2.0)}$}         & \multicolumn{6}{c|}{19.9}                      & \multicolumn{6}{p{4em}<{\centering}}{\textcolor{brown}{\textbf{52.1$_{(0.5)}$}}}         & \multicolumn{6}{p{4em}<{\centering}}{30.8$_{(0.8)}$}         & \multicolumn{6}{c|}{21.3}                      & \multicolumn{6}{p{4em}<{\centering}|}{68.4$_{(7.3)}$} \\
    V-REX & \multicolumn{6}{p{4em}<{\centering}}{90.2$_{(0.3)}$}         & \multicolumn{6}{p{4em}<{\centering}}{64.9$_{(1.2)}$}         & \multicolumn{6}{c|}{25.3}                      & \multicolumn{6}{p{4em}<{\centering}}{48.0$_{(0.6)}$}         & \multicolumn{6}{p{4em}<{\centering}}{27.2$_{(0.8)}$}         & \multicolumn{6}{c|}{20.8}                      & \multicolumn{6}{p{4em}<{\centering}|}{71.5$_{(8.3)}$} \\
    CGD   & \multicolumn{6}{p{4em}<{\centering}}{89.6$_{(0.4)}$}         & \multicolumn{6}{p{4em}<{\centering}}{69.1$_{(1.9)}$}         & \multicolumn{6}{c|}{20.5}                      & \multicolumn{6}{p{4em}<{\centering}}{50.6$_{(1.4)}$}         & \multicolumn{6}{p{4em}<{\centering}}{32.0$_{(2.3)}$}         & \multicolumn{6}{c|}{18.6}                      & \multicolumn{6}{p{4em}<{\centering}|}{69.4$_{(7.8)}$} \\
    \midrule
    DomainMix  & \multicolumn{6}{p{4em}<{\centering}}{\textcolor{brown}{\textbf{90.9$_{(0.4)}$}}}         & \multicolumn{6}{p{4em}<{\centering}}{63.6$_{(2.5)}$}         & \multicolumn{6}{c|}{27.3}                      & \multicolumn{6}{p{4em}<{\centering}}{51.6$_{(0.2)}$}         & \multicolumn{6}{p{4em}<{\centering}}{34.2$_{(0.8)}$}         & \multicolumn{6}{c|}{17.4}                      & \multicolumn{6}{p{4em}<{\centering}|}{69.7$_{(5.5)}$} \\
    LISA  & \multicolumn{6}{p{4em}<{\centering}}{89.2$_{(0.9)}$}         & \multicolumn{6}{p{4em}<{\centering}}{\textcolor{brown}{\textbf{72.6$_{(0.1)}$}}}         & \multicolumn{6}{c|}{\textcolor{blue}{\textbf{16.6}}}                      & \multicolumn{6}{p{4em}<{\centering}}{\textcolor{brown}{\textbf{52.8$_{(0.9)}$}}}         & \multicolumn{6}{p{4em}<{\centering}}{\textcolor{brown}{\textbf{35.5$_{(0.7)}$}}}         & \multicolumn{6}{c|}{\textcolor{brown}{\textbf{17.3}}}                      & \multicolumn{6}{p{4em}<{\centering}|}{\textcolor{blue}{\textbf{77.1$_{(6.5)}$}}} \\
    \midrule
    Ours  & \multicolumn{6}{p{4em}<{\centering}}{\textcolor{brown}{\textbf{90.6$_{(0.4)}$}}}         & \multicolumn{6}{p{4em}<{\centering}}{\textcolor{blue}{\textbf{73.3$_{(0.3)}$}}}         & \multicolumn{6}{c|}{\textcolor{brown}{\textbf{17.3}}}                      & \multicolumn{6}{p{4em}<{\centering}}{49.5$_{(1.5)}$}         & \multicolumn{6}{p{4em}<{\centering}}{\textcolor{blue}{\textbf{38.2$_{(1.7)}$}}}         & \multicolumn{6}{c|}{\textcolor{blue}{\textbf{11.3}}}                      & \multicolumn{6}{p{4em}<{\centering}|}{\textcolor{brown}{\textbf{77.0$_{(5.5)}$}}} \\
    \bottomrule
    \end{tabular}%
  \label{tab:exp-group-aware}%
\end{table*}%

\textbf{Model selection}. Following prior works \cite{liu2021just,zhai2021doro}, in the both group-aware and group-oblivious settings, we assume the group information of validation samples are available and select the best model based on worst-case accuracy among all subpopulations on the validation set. In the group-oblivious setting, we also conduct model selection based on the average accuracy to show the impact of validation group label information in our method.

\textbf{Comparison methods in the group-aware setting}. In order to demonstrate the performance of the proposed method, we compare our method with multiple methods that use training group labels during training. The details are as follows:
\begin{itemize}
    \item Invariant risk minimization and other closely related methods, including, IRM \cite{arjovsky2019invariant} tries to learn the invariant correlations between different training distributions; IB-IRM \cite{ahuja2021invariance} improves IRM with information bottleneck; CORAL \cite{sun2016deep} aims to align the correlations of deep neural networks layer activations by a nonlinear transformation, and Fish \cite{shi2021gradient} learns deep neural network with invariant gradient direction for different populations which could promote invariant predictions.
    \item Group distributionally robust optimization and the follow-up algorithms, including, GroupDRO \cite{Sagawa2020Distributionally} is an online optimization algorithm that dynamically assigns importance weights during training to improve the worst-case performance across all the subpopulations; V-REx \cite{krueger2021out} is an extension of distributionally robust optimization by conducting robust optimization over a perturbation set of extrapolated domain; CGD \cite{piratla2022focus} assigns importance weights to each subpopulation so that the model could be trained on the direction leading to the largest decrease in average training loss.
    \item Mixup-based methods. We also compare our method with DomainMix \cite{xu2020adversarial} and LISA \cite{yao2022improving}, where DomainMix introduces adversarial training to mix the samples to synthesize novel examples while LISA mixes the samples within the same subpopulation or same label.
\end{itemize}
Further, we also compare the proposed method with empirical risk minimization (ERM).

\textbf{Comparison methods in the group-oblivious setting}. In the group-oblivious setting, we compare the proposed method with multiple methods that do not require training group information, including
\begin{itemize}
    \item ERM trains the model using standard empirical risk minimization;
    \item Focal loss \cite{lin2017focal} downweights the well-classified examples' loss according to the current classification confidences.
    \item CVaRDro and $\chi^2$Dro \cite{levy2020large} minimize the loss over the worst-case distribution in a neighborhood of the empirical training distribution.
    \item CVaRDoro and $\chi^2$Doro \cite{zhai2021doro} are outlier robust version of CVarDro and $\chi^2$Dro respectively, which dynamically remove the outlier examples during training.
    \item JTT \cite{liu2021just} is a two-stage reweighting algorithm to improve the subpopulation shift robustness, which first constructs an error set and then upweights the samples in the error set.
\end{itemize}

\begin{table*}[!tbp]
  \centering
  \caption{Comparison results with other methods in the group-oblivious setting. We report the mean and corresponding standard deviations in parentheses for multiple random seeds. The best results are in \textcolor{blue}{\textbf{bold}} and \textcolor{blue}{\textbf{blue}}. Asterisks($*$) indicate that standard deviations were not provided in the original paper. \emph{Note that the proposed method focuses on improving the model's worst-case accuracy to improve model subpopulation shift robustness rather than the average accuracy.} It can be seen that the proposed method achieves the best performance on all datasets.}
    \begin{tabular}{l|ccc|ccc|ccc|c|}
    \toprule
    Datasets/ & \multicolumn{3}{c|}{Waterbirds} & \multicolumn{3}{c|}{CelebA} & \multicolumn{3}{c|}{CivilComments} & Camely. \\
    ACC(\%) & Avg.$_{(\uparrow)}$ & Worst$_{(\uparrow)}$ & \multicolumn{1}{c|}{Gap$_{(\downarrow)}$} & Avg.$_{(\uparrow)}$ & Worst$_{(\uparrow)}$ & \multicolumn{1}{c|}{Gap$_{(\downarrow)}$} & Avg.$_{(\uparrow)}$ & Worst$_{(\uparrow)}$ &
    \multicolumn{1}{c|}{Gap$_{(\downarrow)}$} & Avg.$_{(\uparrow)}$ \\
    \midrule
    ERM   & \textcolor{blue}{\textbf{97.0$_{(0.2)}$}} & 63.7$_{(1.9)}$ & 33.3  & \textcolor{blue}{\textbf{94.9$_{(0.2)}$}} & 47.8$_{(3.7)}$ & 47.1  & \textcolor{blue}{\textbf{92.2$_{(0.1)}$}} & 56.0$_{(3.6)}$ & 36.2  & 70.3$_{(6.4)}$ \\
    \midrule
    Focal Loss & 87.0$_{(0.5)}$ & 73.1$_{(1.0)}$ & 13.9  & 88.4$_{(0.3)}$ & 72.1$_{(3.8)}$ & 16.3  & 91.2$_{(0.5)}$ & 60.1$_{(0.7)}$ & 31.1  & 68.1$_{(4.8)}$ \\
    CVaRDro & 90.3$_{(1.2)}$ & 77.2$_{(2.2)}$ & 13.1  & 86.8$_{(0.7)}$ & 76.9$_{(3.1)}$ & 9.9   & 89.1$_{(0.4)}$ & 62.3$_{(0.7)}$ & 26.8  & 70.5$_{(5.1)}$ \\
    CVaRDoro & 91.5$_{(0.7)}$ & 77.0$_{(2.8)}$ & 14.5  & 89.6$_{(0.4)}$ & 75.6$_{(4.2)}$ & 14.0    & 90.0$_{(0.4)}$ & 64.1$_{(1.4)}$ & 25.9  & 67.3$_{(7.2)}$ \\
    $\chi^{2}$Dro & 88.8$_{(1.5)}$ & 74.0$_{(1.8)}$ & 14.8  & 87.7$_{(0.3)}$ & 78.4$_{(3.4)}$ & 9.3   & 89.4$_{(0.7)}$ & 64.2$_{(1.3)}$ & 25.2  & 68.0$_{(6.7)}$ \\
    $\chi^{2}$Doro & 89.5$_{(2.0)}$ & 76.0$_{(3.1)}$ & 13.5  & 87.0$_{(0.6)}$ & 75.6$_{(3.4)}$ & 11.4  & 90.1$_{(0.5)}$ & 63.8$_{(0.8)}$ & 26.3  & 68.0$_{(7.5)}$ \\
    JTT   & 93.6$_{(*)}$ & 86.0$_{(*)}$ & 7.6   & 88.0$_{(*)}$ & 81.1$_{(*)}$ & 6.9   & 90.7$_{(0.3)}$ & 67.4$_{(0.5)}$ & 23.3  & 69.1$_{(6.4)}$ \\
    \midrule
    Ours  & 93.0$_{(0.5)}$ & \textcolor{blue}{\textbf{90.0$_{(1.1)}$}} & \textcolor{blue}{\textbf{3.0}}     & 90.1$_{(0.4)}$ & \textcolor{blue}{\textbf{85.3$_{(4.1)}$}} & \textcolor{blue}{\textbf{4.8}}   & 90.6$_{(0.4)}$ & \textcolor{blue}{\textbf{70.1$_{(0.9)}$}} & \textcolor{blue}{\textbf{20.5}}  & \textcolor{blue}{\textbf{75.1$_{(5.9)}$}} \\
    \bottomrule
    \end{tabular}%
  \label{tab:exp-group-oblivious}%
\end{table*}%

\begin{table*}[!htbp]
  \centering
  \caption{Experimental results when the group labels in the validation set are available or not. We report the average accuracy and worst-case accuracy of the models, while reporting the variation in performance of the model with and without group labels on the validation set. }
    \begin{tabular}{c|ccc|ccc|}
    \toprule
    \multicolumn{1}{c|}{{Group labels in}} & \multicolumn{3}{c|}{Waterbirds} & \multicolumn{3}{c|}{CelebA} \\
    \multicolumn{1}{c|}{{validation set?}}  & Average ${(\uparrow)}$ & Worst-case ${(\uparrow)}$ &Gap ${(\downarrow)}$& Average $_{(\uparrow)}$ & Worst-case $_{(\uparrow)}$ & Gap ${(\downarrow)}$\\
    \midrule
    Yes     & 93.0$_{(0.5)}$ & 90.0$_{(1.1)}$ &3.0 & 90.1$_{(0.4)}$ & 85.3$_{(4.1)}$ & 4.8\\
    No     & 93.6$_{(0.5)}$ & 88.9$_{(0.8)}$ & 4.7 &90.4$_{(0.4)}$ & 84.6$_{(0.8)}$ & 5.8\\
    \midrule
    Variations & +0.6 & -1.1 & +1.7 & +0.3 & -0.7& +1.0\\
    \bottomrule
    \end{tabular}%
  \label{tab:withoutvalidation}%
\end{table*}

\textbf{Comparison methods in the ablation study}. We compare our method with vanilla mixup and in-group mixup, where vanilla mixup is performed on any pair of samples and in-group mixup (IGMix) is performed on the samples with the same labels and from the same subpopulations. Meanwhile, we also compare the proposed method with the naive importance-weighting (IW) method.

\textbf{Implementation details.} 
Within each dataset, we keep the same model architecture as in previous work \cite{yao2022improving}:
ResNet-50 pretrained on ImageNet \cite{he2016deep} for Waterbirds and CelebA, DistilBERT \cite{devlin2018bert} for CivilComments, DenseNet-121 pretrained on ImageNet\cite{huang2019convolutional} for FMoW, and DenseNet-121 without pretraining for Camelyon17. For DistilBERT, we employ the HuggingFace \cite{wolf2019huggingface} implementation and start from the pre-trained weights. For the proposed method, we implement it on the   \href{https://github.com/p-lambda/wilds}{codestack} released with the WILDS datasets \cite{koh2021wilds}. For most comparison methods, we directly use the results in previous work \cite{yao2022improving}, original papers, or reported in the \texttt{WILDs leaderboard}. For the rest of the comparison methods, we reimplement them according to the original papers. We employ vanilla mixup on WaterBirds and Camelyon17 datasets, CutMix \cite{yun2019cutmix} on CelebA and FMoW datasets, and manifoldmix \cite{verma2019manifold} on CivilComments dataset. For all approaches, we tune all hyperparameters with AutoML toolkit NNI \cite{nni2021} based on validation performance.

\subsection{Group-aware experimental results}
We conduct extensive comparison experiments in the group-ware setting on five datasets to verify the robustness of the proposed model against subpopulation shifts. The experimental results are reported in Tab.~\ref{tab:exp-group-aware}. From the experimental results, we have the following observations. 
\begin{itemize}
    \item (1) The proposed method consistently outperforms all comparison methods in terms of worst-case accuracy on all the datasets. For example, on the Waterbirds dataset, the proposed method achieves the worst-case accuracy of 91.6\%, which outperforms all the other comparison methods. Meanwhile, on the FMoW dataset, the proposed method outperforms the second method by 2.7\% in terms of worst-case accuracy. 
    \item  (2) ERM can achieve the best average accuracy on almost all used datasets, but its corresponding worst-case accuracy performance is much lower than the proposed method and comparison methods. The reason for this phenomenon is that the ERM algorithm may over-focus on the majority subpopulation's samples and learn the spurious correlations, therefore improving the average performance of the model based on the spurious correlation. However, due to over-reliance on spurious correlations, the performance of ERM will degrade significantly in the minority subpopulations. While our method can significantly improve the performance of the minority subpopulations by introducing importance-weighted mixup. For example, on the Waterbirds dataset and CelebA dataset, the worst-case accuracy of the proposed method is 27.9\% and 42.8\% higher than ERM, respectively.
    \item (3) Compared with other comparison methods, the proposed method is also competitive in terms of average accuracy. For example, on the Waterbirds and CivilComments dataset, the average accuracy of the proposed method is in the top three.
    \item (4) The proposed method could significantly reduce the gap between the average and worst-case accuracy, which is important for the fairness of the model. Specifically, on the CelebA and FMoW datasets, the proposed method has the lowest average and worst-case accuracy gap. On the Waterbirds and CivilComments, the proposed method has the second-lowest average and worst-case accuracy gap. Moreover, ERM usually has the largest average and worst-case accuracy gap, which significantly affects the application of ERM to fairness-sensitive tasks.
    \item (5) By performing an importance-reweighted mixup and exploring the minority subpopulations, the proposed method can also enhance the robustness of the model to out-of-distribution data. Specifically, the proposed method achieves the second-best performance i.e., 77.0\% average accuracy, on the Camelyon17 dataset. 
\end{itemize}

\begin{figure*}[!htbp]
\centering
\subfigure[Waterbirds]{
\begin{minipage}[t]{0.45\linewidth}
\centering
\includegraphics[width=1\linewidth,height=0.7\linewidth]{./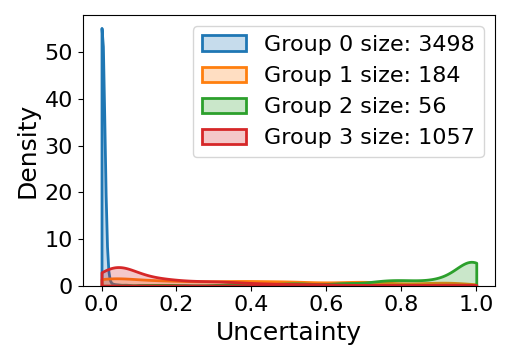}
\centering
\end{minipage}}
\subfigure[CelebA]{
\begin{minipage}[t]{0.45\linewidth}
\centering
\includegraphics[width=1\linewidth,height=0.7\linewidth]{./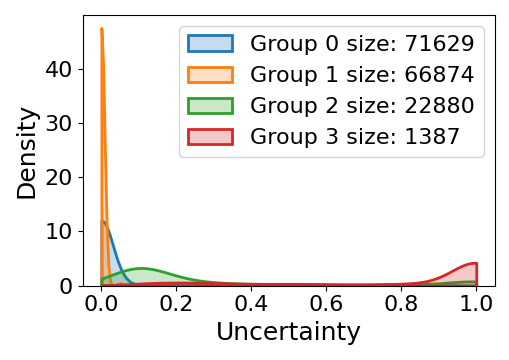}
\end{minipage}}
\caption{\label{fig:quali}Visualization of the obtained uncertainty with kernel density estimation on Waterbirds and CelebA datasets, where group size refers to the sample number of the group in the training dataset. It can be seen from the experimental results that the proposed uncertainty estimation method can better characterize the subpopulation distribution of the training set and find hard samples belonging to the minority subpopulations. }
\end{figure*}
\begin{table*}[!htbp]
  \centering
  \caption{Comparison results with ERM, importance weighting, and mixup-based methods. From the experimental results, it can be seen that the proposed method significantly improves the worst-case accuracy of the model compared with mixup and importance weighting algorithms.}
    \begin{tabular}{lc|ccc|ccc|ccc|}
    \toprule
    \footnotesize{Datasets}/ &\footnotesize{Experimental} & \multicolumn{3}{c|}{Waterbirds} & \multicolumn{3}{c|}{CelebA} & \multicolumn{3}{c|}{CivilComments}  \\
    ACC(\%) & \footnotesize{Setting} & Avg.$_{(\uparrow)}$ & Worst$_{(\uparrow)}$ & \multicolumn{1}{c|}{Gap$_{(\downarrow)}$} & Avg.$_{(\uparrow)}$ & Worst$_{(\uparrow)}$ & \multicolumn{1}{c|}{Gap$_{(\downarrow)}$} & Avg.$_{(\uparrow)}$ & Worst$_{(\uparrow)}$ &
    \multicolumn{1}{c|}{Gap$_{(\downarrow)}$} \\
    \midrule
    ERM & oblivious & \textcolor{black}{{97.0$_{(0.2)}$}} & 63.7$_{(1.9)}$ & 33.3  & 94.9$_{(0.2)}$ & 47.8$_{(3.7)}$ & 47.1  & 92.2$_{(0.1)}$ & 56.0$_{(3.6)}$ & 36.2\\
    \midrule
    mixup& oblivious & 81.0$_{(0.2)}$ & 56.2$_{(0.2)}$ &24.8 & 95.8$_{(0.0)}$ & 46.4$_{(0.5)}$ & 49.4 & 90.8$_{(0.8)}$ & 67.2$_{(1.2)}$ & 23.6\\ 
    IGMix & aware &  88.7$_{(0.3)}$ & 68.0$_{(0.4)}$ &20.7 & 95.2$_{(0.3)}$ & 58.3$_{(0.9)}$ & 36.9 & 90.8$_{(0.6)}$ & 69.2$_{(0.8)}$ & 21.6\\
    IW & aware &  95.1$_{(0.3)}$ & 88.0$_{(1.3)}$ &7.1 & 92.9$_{(0.2)}$ & 83.3$_{(2.8)}$ & 9.6 & 89.8$_{(0.5)}$ & 69.2$_{(0.9)}$ & 20.6 \\
    \midrule
    Ours & oblivious & 93.0$_{(0.5)}$ & 90.0$_{(1.1)}$ & 3.0     & 90.1$_{(0.4)}$ & 85.3$_{(4.1)}$ & 4.8   & 90.6$_{(0.4)}$ & 70.1$_{(0.9)}$ & 20.5 \\
    Ours & aware & 93.5$_{(0.5)}$ & 91.6$_{(0.2)}$ & 1.9     & 91.3$_{(0.5)}$ & 90.6$_{(0.6)}$ & 0.7   & 90.6$_{(0.4)}$ & 73.3$_{(0.3)}$ & 17.3 \\
    \bottomrule
\label{tab:ablation}%
\end{tabular}%
\end{table*}%
\subsection{Group-oblivious experimental results}
In this section, we conduct various experiments in the group-oblivious setting on multiple datasets with subpopulation shifts to answer the following questions. \textbf{Q1 Effectiveness} (\uppercase\expandafter{\romannumeral1}). In the group-oblivious setting, does the proposed method outperform other algorithms? 
\textbf{Q2 Effectiveness} 
(\uppercase\expandafter{\romannumeral2}). How does UMIX perform without the group labels in the validation set?
\textbf{Q3 Qualitative analysis}. Are the obtained uncertainties of the training samples trustworthy?

\textbf{Q1 Effectiveness} (\uppercase\expandafter{\romannumeral1}). {When the group information in the training set is not available, the proposed method could estimate the uncertainty of each training sample and then construct sample-wise importance weights. Therefore, our algorithm could work even when the training group information is not available. We conduct experiments to verify its superiority over current group-oblivious algorithms.} The experimental results are shown in Table~\ref{tab:exp-group-oblivious} and we have the following observations.
\begin{itemize}
    \item (1) Benefiting from exploring the hard samples with importance reweighted mixup, the proposed \confmix achieves the best worst-case accuracy on Waterbirds, CelebA, and CivilComments datasets. For example, for the Waterbirds dataset, \confmix achieves worst-case accuracy of 90.0\%, while the second-best is 86.0\%. Moreover, on the CelebA dataset, the proposed method outperforms the second-best by 4.2\%.
    \item (2) As previously pointed out, ERM consistently outperforms other methods in terms of average accuracy. However, it typically comes with the lowest worst-case accuracy and highest average and worst-case accuracy gap. 
    \item (3) Compared to other methods, \confmix still shows competitive average accuracy. For example, on CelebA, \confmix achieves the average accuracy of 90.1\%, which outperforms all other IW/DRO methods.
    \item (4) The proposed method consistently achieves the lowest average and worst-case accuracy gap. Specifically, the proposed method achieves 3.0\%, 4.8\%, and 20.5\% accuracy gaps on Waterbirds, CelebA, and CivilComments datasets respectively, while the second-best is 7.6\%, 6.9\%, and 23.3\% respectively.
    \item (5) Benefiting from exploring difficult samples, on the Camelyon17 dataset, the proposed method achieves the best average accuracy. Specifically, \confmix achieves the best average accuracy of 75.1\% while the second is 70.3\%. This shows that the proposed method can improve the generalization of the model better than other reweighting algorithms.
\end{itemize}

\textbf{Q2 Effectiveness} (\uppercase\expandafter{\romannumeral2}). We further evaluate the Waterbirds and CelebA datasets without using the validation set group label information and the experimental results are shown in Table~\ref{tab:withoutvalidation}. Specifically, after each training epoch, we evaluate the performance of the current model on the validation set and save the model with the best average accuracy or best worst-case accuracy. Finally, we test the performance of the saved model on the test set. 
From the experimental results, we can observe that when the validation set group information is not used, the worst-case accuracy of our method drops a little while the average accuracy improves a little. Specifically, the average accuracy on the Waterbirds and CelebA datasets increased by 0.6\% and 0.3\%, respectively, while the worst-case accuracy decreased by 1.1\% and 0.7\%. It is worth noting that the gap between the average and worst-case accuracy will also increase slightly when the validation group label is not available.

\textbf{Q3 Qualitative analysis}. To intuitively investigate the rationality of the estimated uncertainty, we visualize the density of the uncertainty for different groups with kernel density estimation. As shown in Fig.~\ref{fig:quali}, the statistics of estimated uncertainty are basically correlated to the training sample size of each group. For example, on Waterbirds and CelebA, the average uncertainties of minority groups are much higher, while those of majority groups are much lower. 
\subsection{Ablation study}
In this section, we conduct the ablation study to show the proposed method could achieve better performance than mixup-based and importance weighting. The experimental results are reported in Table~\ref{tab:ablation}. From the results, we have the following observations.
\begin{itemize}
    \item (1) The proposed method could slightly improve the model's performance against subpopulation shifts when the group information is available during training. 
    \item (2) Due to the underutilization of the minority subpopulation‘s samples, mixup cannot improve the model's robustness to subpopulation shifts significantly, which also has been both experimentally and theoretically illustrated in previous work \cite{yao2022improving}.  
    \item (3) Compared with the mixup, the proposed method could achieve better performance in terms of both average and worst-case accuracy on all the datasets. For example, in the group-oblivious setting, the proposed method outperforms mixup by 33.8\%, 38.9\%, and 2.9\% in terms of worst-case accuracy on Waterbirds, CelebA, and CivilComments datasets. Meanwhile, in the group-aware setting, the proposed method also outperforms the in-group mixup.
    \item (4) By equipping mixup with importance weighting, the proposed method significantly improves the model robustness against subpopulation shifts. Specifically, which is also consistent with our conclusions stated in the Theorem~\ref{thm:generalization}, i.e., the proposed method has a tighter generalization bound than the importance weighting method.
\end{itemize}

\section{Conclusion}
In this paper, we aim to propose a method to improve the robustness against subpopulation shifts, which is critical for fair machine learning. To this end, we proposed a simple yet effective approach called \Umix to integrate importance weights into the well-known mixup strategy so that \Umix can mitigate the overfitting during training thus improving prior IW methods. We provide a strong theoretical intuition to show why the proposed method can improve performance.
To obtain reliable importance weights, we propose two different strategies to adapt to the group-obvious and group-aware settings of subpopulation shift. Specifically, in the group-aware setting, we propose a method based on subpopulation size to set the weight of samples. Moreover, in the group-oblivious setting, we propose methods to estimate the uncertainty of training samples and set weights accordingly. We also provide insightful theoretical analysis to show why the proposed method works better than vanilla mixup and CutMix from the Rademacher complexity perspective. Specifically, \Umix shows the theoretical advantage that the learned model comes with a subpopulation-heterogeneity dependent generalization bound.
Extensive experiments are conducted and the results show that the proposed method consistently outperforms previous approaches on commonly-used benchmarks.
In the future, we will think about how to employ \Umix in large-scale pre-trained models to improve the fairness of the large-scale model.
\bibliographystyle{IEEEtran}
\bibliography{main}

\ifCLASSOPTIONcaptionsoff
  \newpage
\fi





\newpage
\onecolumn
\appendices
\section{More details about uncertainty-based importance weights}
\label{sec:expdetail}
In this section, we present More details about uncertainty-based importance weights. Specifically, we describe uncertainty quantification results on simulated dataset in Sec.~\ref{sec:toyexp}, training accuracy of different subpopulations throughout training process in Sec.~\ref{sec:trainacc} and Justification for choosing historical-based uncertainty score in Sec.~\ref{sec:justification}.
\subsection{Uncertainty quantification results on simulated dataset}
\label{sec:toyexp}
We conduct a toy experiment to show the uncertainty quantification could work well on the dataset with subpopulation shift. Specifically, we construct a four moons dataset (i.e., a dataset with four subpopulations) as shown in Fig.~\ref{fig:toyexp}(a). We compare our approximation (i.e., Eq.~\ref{eq:approximation}) with the following ensemble-based approximation:
\begin{equation}
u_i \approx \frac{1}{T}\sum_{t=1}^{T}\kappa(y_i, \hat{f}_{\theta_t}(x_i))p(\theta_t;\mathcal{D})d\theta.
\end{equation}
Specifically, we train  $T$ models and then ensemble them. The quantification results are shown in Fig.~\ref{fig:toyexp}(b)(c). We can observe that (1) the proposed historical-based uncertainty quantification method could work well on the simulated dataset; (2) compared with the ensemble-based method, the proposed method could better characterize the subpopulation shift.

\begin{figure*}[!htbp]
\centering
\subfigure[Simulated dataset]{
\begin{minipage}[t]{0.3\linewidth}
\centering
\includegraphics[width=1\linewidth,height=0.75\linewidth]{./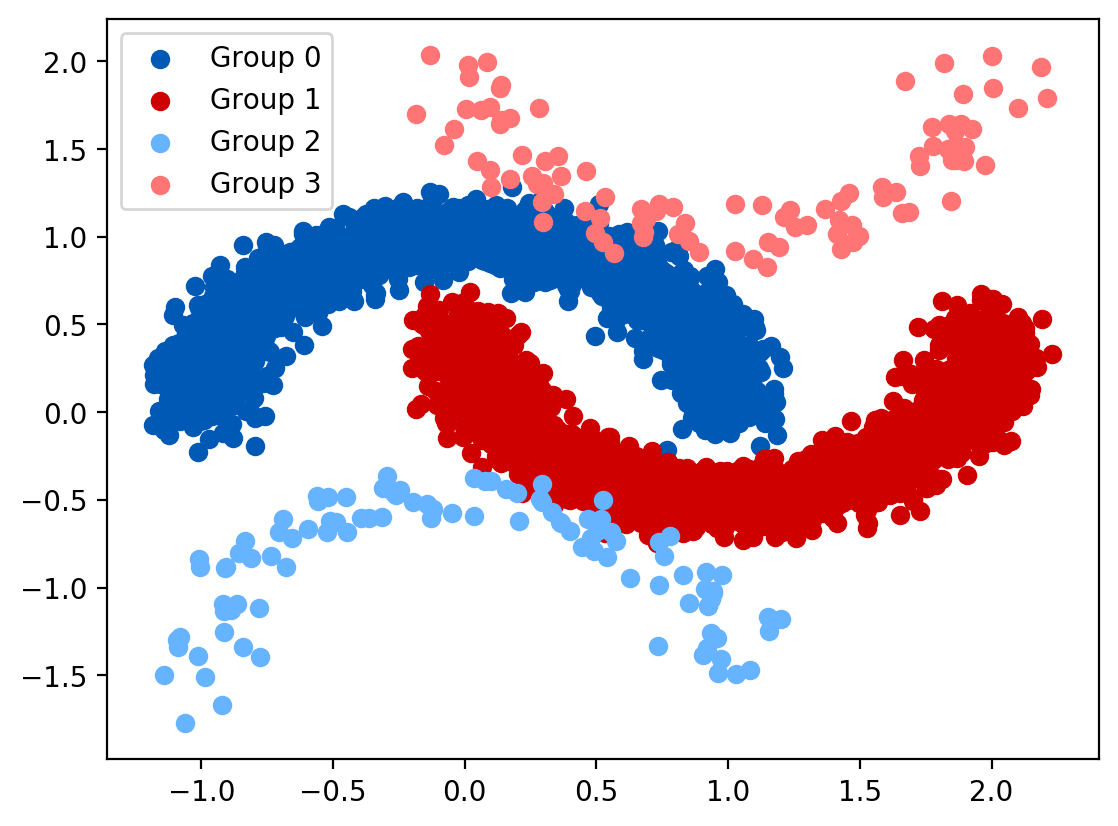}
\centering
\end{minipage}
}
\subfigure[Ours]{
\begin{minipage}[t]{0.3\linewidth}
\centering
\includegraphics[width=1\linewidth,height=0.75\linewidth]{./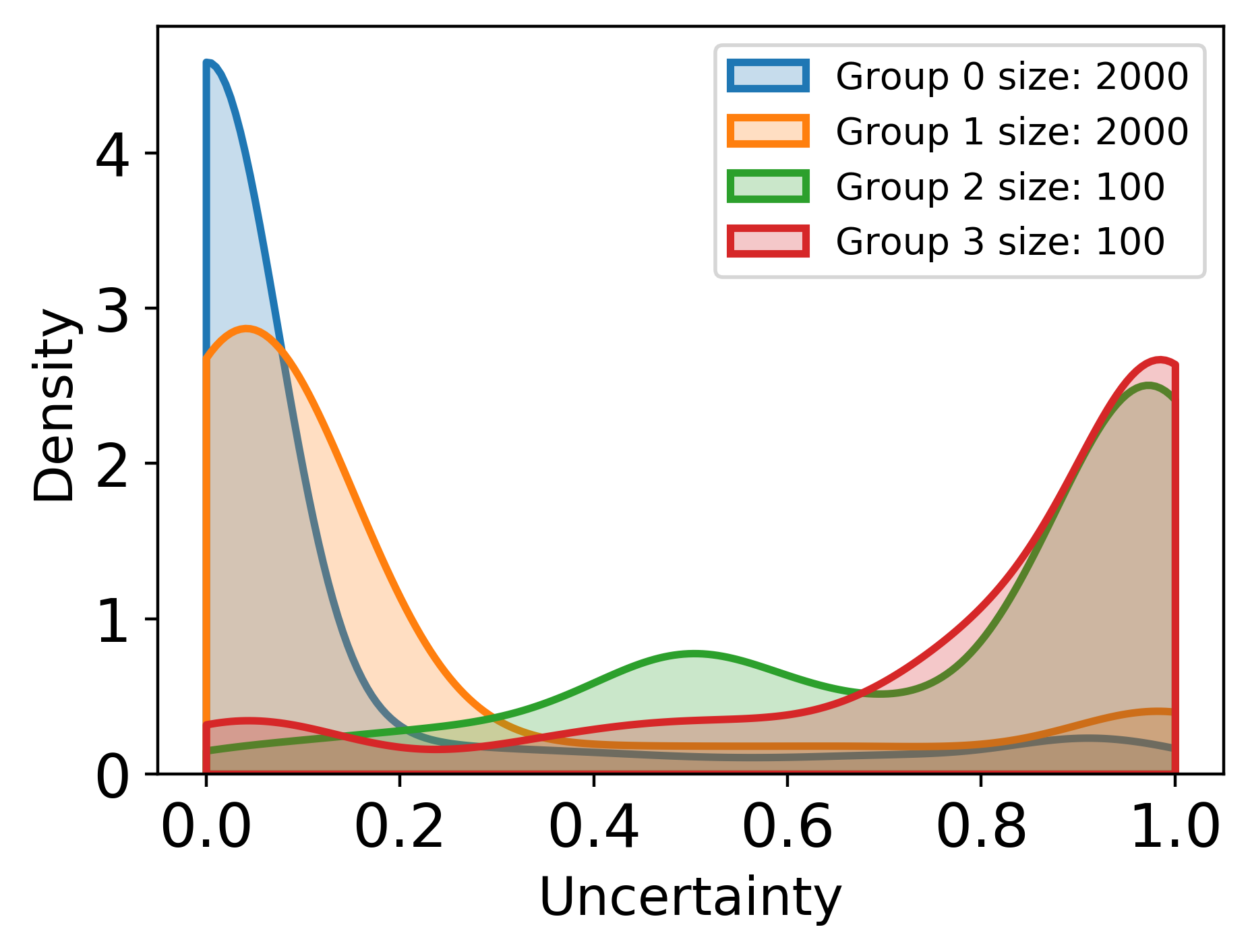}
\centering
\end{minipage}}
\subfigure[Ensemble]{
\begin{minipage}[t]{0.3\linewidth}
\centering
\includegraphics[width=1\linewidth,height=0.75\linewidth]{./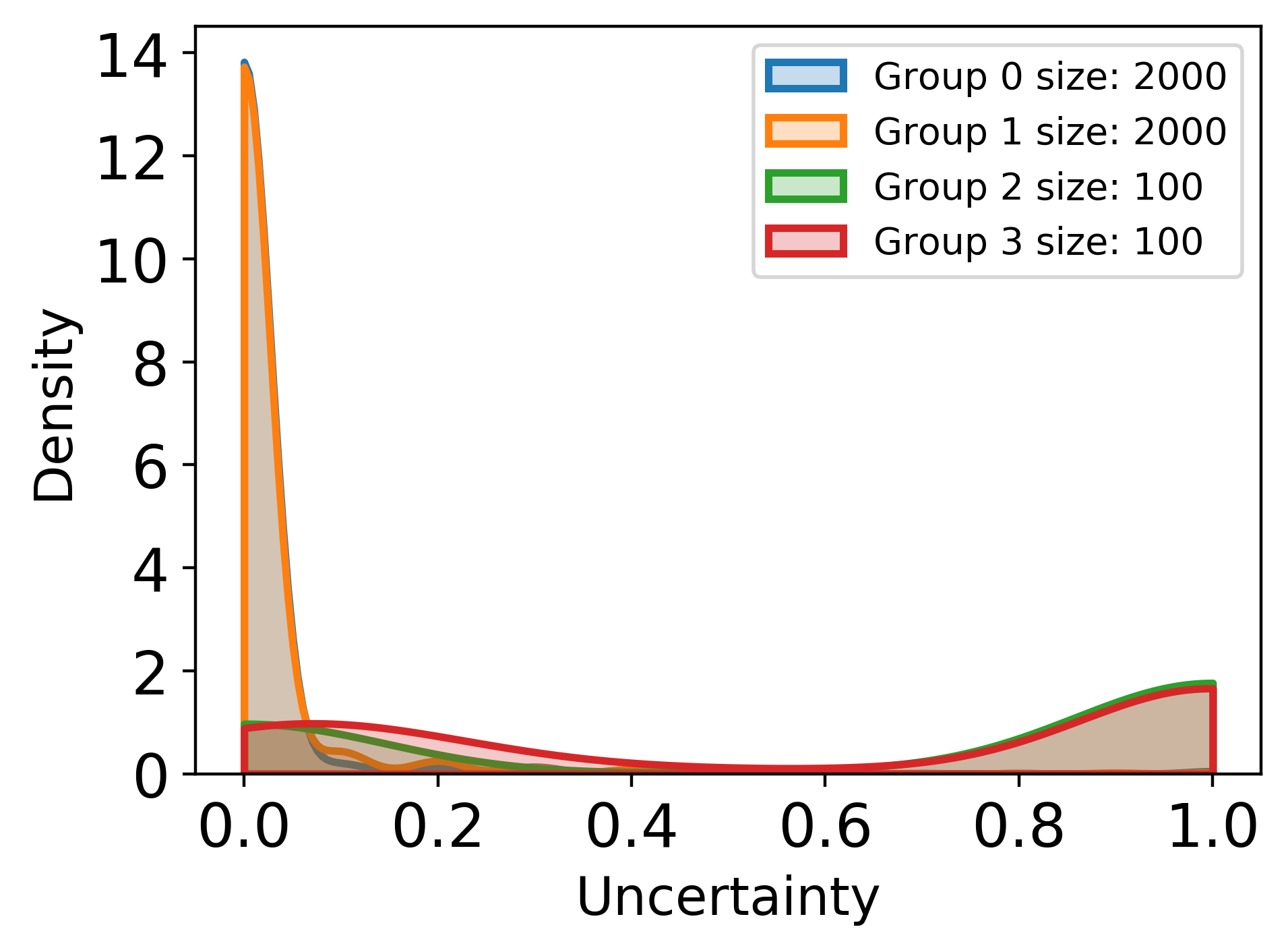}
\end{minipage}}
\caption{(a) Simulated dataset with four different subpopulations. In the four subpopulations, Group 0 and Group 2 have the
same label and groups 1 and 3 have the same labels. (b)(c) Visualization of the obtained uncertainty with kernel density estimation on simulated dataset, where group size refers to the sample number of the group. \label{fig:toyexp}}
\end{figure*}

\subsection{Training accuracy throughout training}
\label{sec:trainacc}
We present how the training accuracy change throughout training in Fig.~\ref{fig:trainingacc} on the CelebA and Waterbirds datasets to empirically show why the proposed estimation approach could work. From the experimental results, we observe that during training, easy groups with sufficient samples can be fitted well, and vice versa. For example, on the CelebA dataset, Group 0 and Group 1 with about 72K and 67K training samples quickly achieved over 95\% accuracy. The accuracy rate on Group 2, which has about 23K training samples, increased more slowly and finally reached around 84\%. The accuracy on Group 3, which has only about 1K training samples, is the lowest. Meanwhile, On the Waterbirds dataset, the samples of hard-to-classify group (e.g., Group 1) are also more likely to be forgotten by the neural networks.

\begin{figure*}[!htbp]
\centering
\subfigure[CelebA]{
\begin{minipage}[t]{0.48\linewidth}
\centering
\includegraphics[width=1\linewidth,height=0.6\linewidth]{./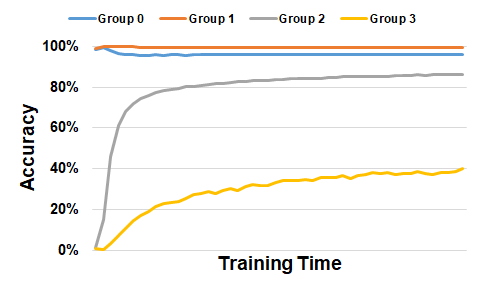}
\centering
\end{minipage}}
\subfigure[Waterbirds]{
\begin{minipage}[t]{0.48\linewidth}
\centering
\includegraphics[width=1\linewidth,height=0.6\linewidth]{./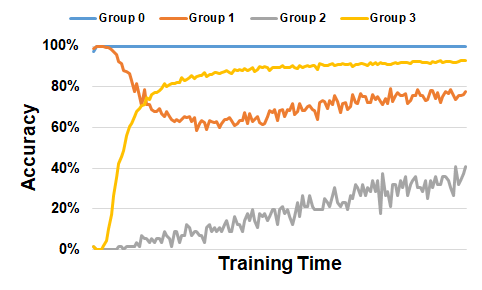}
\end{minipage}}
\caption{Visualization of the changing of training accuracy on different groups of  CelebA and Waterbirds datasets. \label{fig:trainingacc}}
\end{figure*}

\subsection{Justification for choosing historical-based uncertainty score}
\label{sec:justification}
We employ the information from the historical training trajectory to approximate the sampling process because it is simple and effective in practice. Empirically, in contrast to other typical uncertainty quantification methods such as Bayesian learning or model ensemble \cite{gal2016dropout,lakshminarayanan2017simple}, our method can significantly reduce the computational and memory-storage cost by employing the information from the historical training trajectory, since Bayesian learning or model ensemble needs to sample/save multiple DNN models and performs inference computations on them. Meanwhile, our method has achieved quite promising final accuracy in contrast to other methods. In summary, we choose an uncertainty score that can achieve satisfactory performance while being more memory and computationally efficient.

\section{Proof}
In this appendix, we prove the Theorem~5.1 in Section~5 using the template from the proof in \cite{park2022a} and incoporate our \IWmix.
We define a training dataset as $D=\{z_i = (x_i, y_i)\}_{i=1}^m$, randomly sampled from a distribution $\mathcal{P}_z$. Then for randomly selected two data samples, $z_i$ and $z_j$, $\tilde{z}_{i,j}^{\IWmix}$, is synthesized as follows
\begin{align*}
    \tilde{z}_{i,j}^{\IWmix}(\lambda) = (\tilde{x}_{i,j}^{\IWmix}(\lambda), \tilde{y}_{i,j}^{\IWmix}(\lambda)),
\end{align*}
where 
\begin{equation}
\begin{aligned}
\label{eq:IWmixup}
    \tilde{x}_{i,j}^{\IWmix}(\lambda) &= M(\lambda)\odot x_i + (1-M(\lambda))\odot x_j,\\
    \tilde{y}_{i,j}^{\IWmix}(\lambda) &= \lambda y_i + (1-\lambda)y_j,
\end{aligned}
\end{equation}
and $\lambda$ is the ratio parameter between samples drawn from $\mathcal{D}_{\lambda}$. $\odot$ means a component-wise multiplication in vector or matrix. $M(\lambda)$ is a random variable conditioned on $\lambda$ that indicates how we mix the input, e.g., by linear interpolation~\cite{zhang2018mixup} or by a pixel mask~\cite{yun2019cutmix}.

Under this template, two most popular Mixup methods, Mixup~\cite{zhang2018mixup} and CutMix~\cite{yun2019cutmix}, for $i$-th and $j$-th samples with $\lambda$ drawn from $\mathcal{D}_{\lambda}$, can be rewriten as follow
\begin{equation}
\begin{aligned}
& \text { Mixup: } \quad \tilde{z}_{i, j}^{(\text {mixup })}(\lambda)=\left(\tilde{x}_{i, j}^{(\text {mixup })}(\lambda), \tilde{y}_{i, j}^{(\text {mixup })}(\lambda)\right) \\
\text{ where } &\tilde{x}_{i, j}^{\text {(mixup })}(\lambda)=\lambda x_i+(1-\lambda) x_j \quad \text { and } \quad \tilde{y}_{i, j}^{(\text {mixup })}(\lambda)=\lambda y_i+(1-\lambda) y_j.
\end{aligned}
\end{equation}

\begin{equation}
\begin{aligned}
& \text { CutMix: } \quad \tilde{z}_{i, j}^{\text {(cutmix)}}(\lambda)=\left(\tilde{x}_{i, j}^{\text {(cutmix })}(M), \tilde{y}_{i, j}^{\text{(cutmix)}}(\lambda)\right) \\
\text { where } & \tilde{x}_{i, j}^{\text {(cutmix)}}\left({M}, 1-{M}\right)={M} \odot x_i+\left(1-{M}\right) \odot x_j \quad \text { and } \quad \tilde{y}_{i, j}^{\text {(cutmix) }}(\lambda)=\lambda y_i+(1-\lambda) y_j, \\
&
\end{aligned}
\end{equation}
i.e., the Mixup is a special case of Equation~\ref{eq:IWmixup} by putting $M(\lambda) = \lambda \overrightarrow{1}$ and the CutMixup is also a special case where ${M}$ is a binary mask that indicates the location of the cropped box region with a relative area.

We consider the following optimization objective, which is the expected version of our weighted mixup loss (Equation~\ref{eq:weighted-loss}).
\begin{align*}
    L_n^{\IWmix}(\theta, S) = \frac{1}{n^2} \sum^{n}_{i,j=1} \E_{\lambda \sim D_{\lambda}} \E_{M} [l(\theta, \tilde{z}_{i,j}(\lambda))],
\end{align*}
where the loss function we consider is $l(\theta, x,y) = h(f_{\theta}(x)) - yf_{\theta}(x)$ and $h(\cdot)$ and $f_{\theta}(\cdot)$ for all $\theta\in \Theta$ are twice differentiable. 
We compare it with the standard weighted loss function
\begin{align*}
    L_n(\theta, S) = \frac{1}{n} \sum_{i=1}^n w_i [h(f_\theta(x_i)) - y_i f_\theta(x_i)].
\end{align*}

\begin{lemma}
\label{lm:mixup-corresponding}
The weighted mixup loss can be rewritten as 
\begin{align*}
    L_{n}^{\IWmix}(\theta, S)=L_{n}(\theta, S)+\sum_{i=1}^{3} \mathcal{R}_{i}(\theta, S)+\mathbb{E}_{\lambda \sim \tilde{\mathcal{D}}_{\lambda}} \mathbb{E}_{M}\left[(1-M)^{\top}\varphi(1-M)(1-M)\right],
\end{align*}
where 
$\tilde{\mathcal{D}}_{\lambda}$ is a uniform mixture of two Beta distributions, i.e., $\frac{\alpha}{\alpha+\beta}Beta(\alpha+1,\beta) + \frac{\beta}{\alpha+\beta}Beta(\beta+1,\alpha)$ and $\psi(\cdot)$ is some function with $\lim_{a\rightarrow 0}\psi(a)=0$. Moreover, 
\begin{equation}
\begin{aligned}
& \mathcal{R}_1(\theta, S)=\frac{1}{n} \sum_{i=1}^n \left(y_i-h^{\prime}\left(f_\theta\left(x_i\right)\right)\right)\left(\nabla f_\theta\left(x_i\right)^{\top} x_i\right) \mathbb{E}_{\lambda \sim \tilde{D}_\lambda}(1-\lambda), \\
& \mathcal{R}_2(\theta, S)=\frac{1}{2 n} \sum_{i=1}^n h^{\prime \prime}\left(f_\theta\left(x_i\right)\right) \mathbb{E}_{\lambda \sim \tilde{D}_\lambda} \mathbf{G}\left(\mathcal{D}_X, x_i, f, M\right), \\
& \mathcal{R}_3(\theta, S)=\frac{1}{2 n} \sum_{i=1}^n \left(h^{\prime}\left(f_\theta\left(x_i\right)\right)-y_i\right) \mathbb{E}_{\lambda \sim \tilde{D}_\lambda} \mathbf{H}\left(\mathcal{D}_X, x_i, f, M\right),
\end{aligned}
\end{equation}
and 
\begin{equation}
\begin{aligned}
& \mathbf{G}\left(\mathcal{D}_X, x_i, f, M\right)=\mathbb{E}_M(1-M)^{\top} \mathbb{E}_{r_x \sim \mathcal{D}_X}\left(\nabla f\left(x_i\right) \odot\left(r_x-x_i\right)\left(\nabla f\left(x_i\right) \odot\left(r_x-x_i\right)\right)^{\top}\right)(1-M) \\
& =\sum_{j, k \in \text { coord }} a_{j k} \partial_j f_\theta\left(x_i\right) \partial_k f_\theta\left(x_i\right)\left(\mathbb{E}_{r_x \sim \mathcal{D}_X}\left[r_{x j} r_{x k}\right]+x_{i j} x_{i k}\right), \\
& \mathbf{H}\left(\mathcal{D}_X, x_i, f, M\right)=\mathbb{E}_{r_x \sim \mathcal{D}_X} \mathbb{E}_M(1-M)^{\top}\left(\nabla^2 f_\theta\left(x_i\right) \odot\left(\left(r_x-x_i\right)\left(r_x-x_i\right)^{\top}\right)\right)(1-M) \\
& =\sum_{j, k \in \text { coord }} a_{j k}\left(\mathbb{E}_{r_x \sim \mathcal{D}_X}\left[r_{x j} r_{x k} \partial_{j k}^2 f_\theta\left(x_i\right)\right]+x_{i j} x_{i k} \partial_{j k}^2 f_\theta\left(x_i\right)\right).
\end{aligned}
\end{equation}
\end{lemma}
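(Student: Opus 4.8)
The plan is to follow the loss-expansion template of \cite{park2022a}, adapting it to the weighted objective $L_n^{\IWmix}$ and to the unified mixup/CutMix sampling scheme of Eq.~\ref{eq:IWmixup}. The proof proceeds in three stages: (i) a reparametrization that collapses the double sum over sample pairs into a single sum against a modified mixing distribution $\tilde{\mathcal{D}}_\lambda$; (ii) a second-order Taylor expansion of the loss $l(\theta, x, y) = h(f_\theta(x)) - y f_\theta(x)$ in the mixing perturbation, anchored at each clean point $x_i$; and (iii) bookkeeping of the resulting terms into $L_n$, the three remainders $\mathcal{R}_1,\mathcal{R}_2,\mathcal{R}_3$, and a genuinely higher-order residual. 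Throughout I would write the mixed input as $\tilde{x}_{i,j} = x_i + (1-M)\odot(x_j - x_i)$, so that $x_i$ is the anchor and $\delta_{i,j} = (1-M)\odot(x_j - x_i)$ is the small perturbation, while the mixed label is $\tilde{y}_{i,j} = y_i + (1-\lambda)(y_j - y_i)$.

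First I would symmetrize the summand over the ordered pairs $(i,j)$ and over $\lambda \leftrightarrow 1-\lambda$. Because the sum $\frac{1}{n^2}\sum_{i,j}$ is symmetric under exchanging $i$ and $j$ while $\lambda$ is drawn from $Beta(\alpha,\beta)$, averaging against this symmetric sum is equivalent to averaging over the mixture $\tilde{\mathcal{D}}_\lambda = \frac{\alpha}{\alpha+\beta}Beta(\alpha+1,\beta) + \frac{\beta}{\alpha+\beta}Beta(\beta+1,\alpha)$, exactly as in \cite{zhang2021how}. The point of this step is that $\tilde{\mathcal{D}}_\lambda$ concentrates $\lambda$ near $1$, so that $(1-M)$ is the small quantity legitimizing the perturbative expansion.

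Next I would Taylor expand $f_\theta(\tilde{x}_{i,j})$ to second order in $\delta_{i,j}$ and substitute into $h(f_\theta(\tilde{x})) - \tilde{y}_{i,j}f_\theta(\tilde{x})$. The zeroth-order piece reproduces $l(\theta, x_i, y_i)$, whose weighted average over $i$ is exactly $L_n(\theta, S)$. The first-order-in-$\delta$ contribution, after taking the expectation over the second sample and using the data-centering convention $\E_{r_x\sim\mathcal{D}_X}[r_x]=0$ (which cancels the cross moments) together with the GLM identity for $\nabla f_\theta(x_i)^\top x_i$, collapses into $\mathcal{R}_1$, where $(y_i - h'(f_\theta(x_i)))$ is the GLM residual and $\E_{\tilde{\mathcal{D}}_\lambda}(1-\lambda)$ is the mean perturbation scale. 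The two second-order contributions split by which function is differentiated twice: the curvature of $h$ yields the $\tfrac12 h''(f_\theta(x_i))$ term $\mathcal{R}_2$ carrying the quadratic form $\mathbf{G}$, while the curvature of $f_\theta$ through $\nabla^2 f_\theta$ yields the $\tfrac12(h'(f_\theta(x_i)) - y_i)$ term $\mathcal{R}_3$ carrying $\mathbf{H}$. In both, the coefficients $a_{jk} = \E_M[(1-M_j)(1-M_k)]$ arise as the entries of $\E_M[(1-M)(1-M)^\top]$, which is precisely the object that lets the statement cover CutMix (with $M$ a random binary mask) and vanilla mixup (with $M = \lambda\overrightarrow{1}$) in one stroke.

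The main obstacle I anticipate is controlling the residual and certifying that it has the claimed shape $\E_{\lambda\sim\tilde{\mathcal{D}}_\lambda}\E_M[(1-M)^\top\varphi(1-M)(1-M)]$ with a factor that vanishes as its argument tends to $0$. This requires a quantitative Taylor remainder bound showing that the third- and higher-order terms are $o(\|1-M\|^2)$ uniformly over $\theta\in\Theta$; the twice-differentiability of $h$ and $f_\theta$ supplies the needed continuity, and the concentration of $\tilde{\mathcal{D}}_\lambda$ near $\lambda=1$ guarantees that $(1-M)$ is small in the relevant regime. A secondary, purely bookkeeping difficulty is that the component-wise $\odot$ structure forces the second-order terms to be expressed as the double coordinate sums defining $\mathbf{G}$ and $\mathbf{H}$ rather than as plain matrix products, so I would keep the mask in coordinate form throughout and only recognize $a_{jk}$ as the entries of $\E_M[(1-M)(1-M)^\top]$ at the very end.
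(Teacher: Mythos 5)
Your proposal follows essentially the same route as the paper's proof: collapsing the pair sum into a single sum against the tilted mixture $\tilde{\mathcal{D}}_{\lambda}$ (the paper does this via Beta--Binomial conjugacy, which is the same mechanism as your symmetrization), then a second-order Taylor expansion of $\phi_i(N)=h(f_\theta(x_i+N\odot(r_x-x_i)))-y_i f_\theta(\cdot)$ in $N=1-M$ around $N=0$, with the zeroth-, first-, and second-order terms identified as $L_n$, $\mathcal{R}_1$, and $\mathcal{R}_2,\mathcal{R}_3$ respectively and the remainder absorbed into $\varphi$. The residual control you flag as the main obstacle is handled in the paper only qualitatively, via the standard Taylor remainder property $\lim_{N\to 0}\varphi(N)=0$, so no additional quantitative bound is needed to match the stated result.
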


\begin{proof}
\label{subsec:proof-lemma-mixup}
Due to the assumption of the loss function, we can rewrite the empirical loss for the non-augmented population as
$$
L_n(\theta)=\frac{1}{n} \sum_{i=1}^n w_i l\left(\theta, z_i\right)=\frac{1}{n} \sum_{i=1}^n w_i \left[h\left(f_\theta\left(x_i\right)\right)-y_i f_\theta\left(x_i\right)\right] .
$$
Similarly, we can rewrite the \IWmix loss as
$$
\begin{aligned}
L_n^{\IWmix}(\theta) & =\frac{1}{n^2} \sum_{i, j=1}^n \mathbb{E}_{\lambda \sim \operatorname{Beta}(\alpha, \beta)} \mathbb{E}_M l\left(\theta, \tilde{z}_{i, j}^{\IWmix}(M)\right) \\
& =\frac{1}{n^2} \sum_{i, j=1}^n \mathbb{E}_{\lambda \sim \operatorname{Beta}(\alpha, \beta)} \mathbb{E}_M\left[h\left(f_\theta\left(\tilde{x}_{i, j}^{\IWmix}(M)\right)\right)-\tilde{y}_{i, j}^{\IWmix}(M) f_\theta\left(\tilde{x}_{i, j}^{\IWmix}(M)\right)\right] .
\end{aligned}
$$

Putting the definition of $\tilde{z}_{i, j}^{\IWmix}(M)$ to the equation above, we have
$$
\begin{aligned}
L_n^{\IWmix}(\theta) & =\frac{1}{n^2} \sum_{i, j=1}^n\left(\mathbb { E } _ { \lambda \sim \operatorname { B e t a } ( \alpha , \beta ) } \mathbb { E } _ { M } \left(\lambda w_i\left(h\left(f_\theta\left(\tilde{x}_{i, j}^{\IWmix}(M)\right)\right)-y_i f_\theta\left(\tilde{x}_{i, j}^{\IWmix}(M)\right)\right)\right.\right. \\
& \left.\left.+(1-\lambda)w_j\left(h\left(f_\theta\left(\tilde{x}_{i, j}^{\IWmix}(M)\right)\right)-y_j f_\theta\left(\tilde{x}_{i, j}^{\IWmix}(M)\right)\right)\right)\right) \\
& =\frac{1}{n^2} \sum_{i, j=1}^n\left(\mathbb { E } _ { \lambda \sim \operatorname { B e t a } ( \alpha , \beta ) } \mathbb { E } _ { B \sim \operatorname { B i n } ( \lambda ) } \mathbb { E } _ { M } \left(Bw_i\left(h\left(f_\theta\left(\tilde{x}_{i, j}^{\IWmix}(M)\right)\right)-y_i f_\theta\left(\tilde{x}_{i, j}^{\IWmix}(M)\right)\right)\right.\right. \\
& \left.\left.+(1-B)w_j\left(h\left(f_\theta\left(\tilde{x}_{i, j}^{\IWmix}(M)\right)\right)-y_j f_\theta\left(\tilde{x}_{i, j}^{\IWmix}(M)\right)\right)\right)\right)
\end{aligned}
$$
Note that $\lambda \sim \operatorname{Beta}(\alpha, \beta)$ and $B \mid \lambda \sim \operatorname{Bin}(\lambda)$. By conjugacy, we can write the joint distribution of $(\lambda, B)$ as
$$
B \sim \operatorname{Bin}\left(\frac{\alpha}{\alpha+\beta}\right), \quad \lambda \mid B \sim \operatorname{Beta}(\alpha+B, \beta+1-B) .
$$

Therefore, we have
\begin{align}
L_n^{\IWmix}(\theta) & =\frac{1}{n^2} \sum_{i, j=1}^n\left(\mathbb{E}_{\lambda \sim \operatorname{Beta}(\alpha+1, \beta)} \mathbb{E}_M \frac{\alpha}{\alpha+\beta}w_i\left(h\left(f_\theta\left(\tilde{x}_{i, j}^{\IWmix}(M)\right)\right)-y_i f_\theta\left(\tilde{x}_{i, j}^{\IWmix}(M)\right)\right)\right. \notag \\
& \left.+\mathbb{E}_{\lambda \sim \operatorname{Beta}(\alpha, \beta+1)} \mathbb{E}_M \frac{\beta}{\alpha+\beta}w_j\left(h\left(f_\theta\left(\tilde{x}_{i, j}^{\IWmix}(M)\right)\right)-y_j f_\theta\left(\tilde{x}_{i, j}^{\IWmix}(M)\right)\right)\right) \notag\\
& =\frac{1}{n} \sum_{i=1}^n w_i \mathbb{E}_{\lambda \sim \tilde{\mathcal{D}}(\lambda)} \mathbb{E}_{r_x \sim \mathcal{D}_x} \mathbb{E}_M\left[h\left(f_\theta\left(w_i M \odot x_i+(1-M) \odot r_x\right)\right)-y_i f_\theta\left(w_i M \odot x_i+(1-M) \odot r_x\right)\right] \label{eq:loss_decompose} \\
& =\frac{1}{n} \sum_{i=1}^n w_i\mathbb{E}_{\lambda \sim \tilde{\mathcal{D}(\lambda)}} \mathbb{E}_{r_x \sim \mathcal{D}_x} \mathbb{E}_M l\left(\theta, \hat{z}_i\right) \label{eq:mixup_loss},
\end{align}
where $\mathcal{D}_x$ is the empirical distribution induced by training samples and their corresponding weights, and $\hat{z}_i=\left(w_i M \odot x_i+(1-M) \odot r_x, y_i\right)$.

Let $N=1-M$. By defining $\phi_i(N)=h\left(f_\theta\left(w_i x_i+N \odot\left(r_x-x_i\right)\right)\right)-y_i f_\theta\left(w_i x_i+N \odot\left(r_x-x_i\right)\right)$ and applying Taylor expansion, we have
\begin{equation}
\label{eq:taylor}
\phi_i(N)=\phi_i(0)+\nabla_N \phi_i(0)^{\top} N+\frac{1}{2} N^{\top} \nabla_N^2 \phi_i(0) N+N^{\top} \varphi(N) N,
\end{equation}
where $\lim _{N \rightarrow 0} \varphi(N)=0$. 
Firstly, we calculate $\phi_i(0)$ by
\begin{equation}
\label{eq:phi_0}
\phi_i(0)=h\left(f_\theta\left(w_i x_i\right)\right)-y_i f_\theta\left(w_i x_i\right) .
\end{equation}
Second, we calculate $\nabla_N \phi_i(0)$ by
$$
\frac{\partial \phi_i(N)}{\partial N_k}=\left(h^{\prime}\left(f_\theta\left(w_i x_i+N \odot\left(r_x-x_i\right)\right)\right)-y_i\right) \frac{\partial f_\theta}{\partial x_{i k}}\left(w_i x_i+N \odot\left(r_x-x_i\right)\right)\left(r_{x k}-x_{i k}\right),
$$
where we denote $N_k$ as the $k$ th element of $N, x_{i k}$ as the $k$ th element of $x_i$, and $r_{x k}$ as the $k$ th element of $r_x$. 
Therefore, we have
\begin{equation}
\label{eq:phi_1}
\begin{aligned}
\nabla_N \phi_i(0)^{\top} N & =\left(h^{\prime}\left(f_\theta\left(x_i\right)\right)-y_i\right) \sum_k\left(\frac{\partial f_\theta}{\partial x_{i k}}\left(w_i x_i\right)\left(r_{x k}-x_{i k}\right)\right) N_k \\
& =\left(h^{\prime}\left(f_\theta\left(x_i\right)\right)-y_i\right)\left(\nabla f \odot\left(r_x-x_i\right)\right) \cdot N.
\end{aligned}
\end{equation}

Finally, we calculate $\nabla_N^2 \varphi_i(\overrightarrow{0})^\top$ by
$$
\begin{aligned}
\frac{\partial^2 \phi_k(N)}{\partial N_k \partial N_j}= & \frac{\partial}{\partial N_j}\left(\left(h^{\prime}\left(f_\theta\left(w_i x_i+N \odot\left(r_x-x_i\right)\right)\right)-y_i\right) \frac{\partial f_\theta}{\partial x_{i k}}\left(w_i x_i+N \odot\left(r_x-x_i\right)\right)\left(r_{x k}-x_{i k}\right)\right) \\
= & h^{\prime \prime}\left(f_\theta\left(w_i x_i+N \odot\left(r_x-x_i\right)\right)\right) \\
& \times \frac{\partial f_\theta}{\partial x_{i k}}\left(w_i x_i+N \odot\left(r_x-x_i\right)\right)\left(r_{x k}-x_{i k}\right) \frac{\partial f_\theta}{\partial x_{i j}}\left(w_i x_i+N \odot\left(r_x-x_i\right)\right)\left(r_{x j}-x_{i j}\right) \\
+ & \left(h^{\prime}\left(f_\theta\left(w_i x_i+N \odot\left(r_x-x_i\right)\right)\right)-y_i\right) \\
& \times \frac{\partial^2 f_\theta}{\partial x_{i k} \partial x_{i j}}\left(w_i x_i+N \odot\left(r_x-x_i\right)\right)\left(r_{x k}-x_{i k}\right)\left(r_{x j}-x_{i j}\right) .
\end{aligned}
$$
Therefore, we have
\begin{equation}
\begin{aligned}
\frac{1}{2} N^{\top} \nabla_N^2 \phi_i(0) N & = \frac{1}{2} h^{\prime \prime}\left(f_\theta\left(x_i\right)\right) \sum_{k, j}\left(\frac{\partial f_\theta}{\partial x_{i k}}\left(x_i\right)\left(r_{x k}-x_{i k}\right) \frac{\partial f_\theta}{\partial x_{i j}}\left(x_i\right)\left(r_{x j}-x_{i j}\right) N_k N_j\right) \\
&\quad +\frac{1}{2}\left(h^{\prime}\left(f_\theta\left(x_i\right)\right)-y_i\right) \sum_{k, j} \frac{\partial^2 f_\theta}{\partial x_{i k} \partial x_{i j}}\left(x_i\right)\left(r_{x k}-x_{i k}\right)\left(r_{x j}-x_{i j}\right) N_k N_j \notag \\
&=\frac{1}{2} h^{\prime \prime}\left(f_\theta\left(x_i\right)\right) N^{\top}\left(\left(\nabla f \odot\left(r_x-x_i\right)\right)\left(\nabla f \odot\left(r_x-x_i\right)\right)^{\top}\right) N \\
&\quad +\frac{1}{2}\left(h^{\prime}\left(f_\theta\left(x_i\right)\right)-y_i\right) N^{\top}\left(\nabla^2 f_\theta\left(x_i\right) \odot\left(\left(r_x-x_i\right)\left(r_x-x_i\right)^{\top}\right)\right) N \label{eq:phi_2}.
\end{aligned}
\end{equation}

Applying \ref{eq:phi_0}-\ref{eq:phi_2} to \ref{eq:taylor},
\begin{equation}
\label{eq:phi_3}
\begin{aligned}
\phi_i(N) & =\left(h\left(f_\theta\left(x_i\right)\right)-y_i f_\theta\left(x_i\right)\right)+\left(h^{\prime}\left(f_\theta\left(x_i\right)\right)-y_i\right)\left(\nabla f \odot\left(r_x-x_i\right)\right) \cdot N \\
& +\frac{1}{2} h^{\prime \prime}\left(f_\theta\left(x_i\right)\right) N^{\top}\left(\left(\nabla f \odot\left(r_x-x_i\right)\right)\left(\nabla f \odot\left(r_x-x_i\right)\right)^{\top}\right) N \\
& +\frac{1}{2}\left(h^{\prime}\left(f_\theta\left(x_i\right)\right)-y_i\right) N^{\top}\left(\nabla^2 f_\theta\left(x_i\right) \odot\left(\left(r_x-x_i\right)\left(r_x-x_i\right)^{\top}\right)\right) N+N^{\top} \varphi(N) N
\end{aligned}
\end{equation}
Plugging \ref{eq:phi_3} to \ref{eq:loss_decompose}, we conclude
$$
\begin{aligned}
L_n^{\IWmix}(\theta) & =\frac{1}{n} \sum_{i=1}^n \mathbb{E}_{\lambda \sim \tilde{\mathcal{D}}(\lambda)} \mathbb{E}_{r_x \sim \mathcal{D}_x} \mathbb{E}_M \phi(1-M) \\
& =L_n(\theta)+\mathcal{R}_1(\theta)+\mathcal{R}_2(\theta)+\mathcal{R}_3(\theta)+\mathbb{E}_{\lambda \sim \tilde{\mathcal{D}}(\lambda)} \mathbb{E}_M\left[(1-M)^{\top} \varphi(1-M) M\right],
\end{aligned}
$$
where
$$
\begin{aligned}
& \mathcal{R}_1(\theta)=\frac{1}{n} \sum_{i=1}^n\left(h^{\prime}\left(f_\theta\left(x_i\right)\right)-y_i\right)\left(\nabla f_\theta\left(x_i\right) \odot \mathbb{E}_{r_x \sim \mathcal{D}_X}\left[r_x-x_i\right]\right) \mathbb{E}_{\lambda \sim \tilde{D}_\lambda} \mathbb{E}_M(1-M), \\
& \mathcal{R}_2(\theta)=\frac{1}{2 n} \sum_{i=1}^n h^{\prime \prime}\left(f_\theta\left(x_i\right)\right) \mathbb{E}_{\lambda \sim \tilde{D}_\lambda} \mathbb{E}_M(1-M)^{\top} \mathbb{E}_{r_x \sim \mathcal{D}_X}\left[\nabla f\left(x_i\right) \odot\left(r_x-x_i\right)\left(\nabla f\left(x_i\right) \odot\left(r_x-x_i\right)\right)^{\top}\right](1-M), \\
& \mathcal{R}_3(\theta)=\frac{1}{2 n} \sum_i^n \mathbb{E}_{\lambda \sim \tilde{D}_\lambda} \mathbb{E}_M(1-M)^{\top} \mathbb{E}_{r_x \sim \mathcal{D}_X}\left[\nabla^2 f_\theta\left(x_i\right) \odot\left(\left(r_x-x_i\right)\left(r_x-x_i\right)^{\top}\right)\right](1-M) .
\end{aligned}
$$
\end{proof}

\begin{lemma}
\label{lm:mixup-GLM-loss}
Consider the centralized dataset, i.e., $\frac{1}{n}\sum_{i=1}^n x_i=0$, we have 
\begin{align*}
    \E_{\lambda\sim \tilde{\mathcal{D}}_{\lambda}}[L_n^{mix}(\theta, \tilde{S})] \approx L_n(\theta, S) + \frac{C}{2n} [\sum_{i=1}^n w_iA^{\prime\prime}(x_i^{\top}\theta)]\theta^{\top}(\mathbb{E}(1-M)\widehat{\Sigma}_X(1-M)^{\top} + x_i\operatorname{Var}(M)x_i^{\top})\theta,
\end{align*}
where $\widehat{\Sigma}_X=\frac{1}{n}\sum_{i=1}^n w_i x_i x_i^{\top}$, $C$ is some constant, 
and the expectation is taken with respect to the randomness of $\lambda$ and $W$.
\end{lemma}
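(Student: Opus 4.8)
The plan is to build directly on the exact decomposition of the \IWmix loss established in Lemma~\ref{lm:mixup-corresponding}, namely $L_n^{\IWmix}(\theta,S) = L_n(\theta,S) + \mathcal{R}_1 + \mathcal{R}_2 + \mathcal{R}_3 + \E_{\lambda}\E_M[(1-M)^\top\varphi(1-M)(1-M)]$, and to collapse this expression down to the claimed single quadratic regularizer by exploiting the GLM structure together with the centralization hypothesis $\frac{1}{n}\sum_i x_i = 0$.

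First I would specialize every term to the GLM. Here $f_\theta(x)=\theta^\top x$ is linear and $h=A$, so $\nabla f_\theta(x_i)=\theta$ is a constant vector and $\nabla^2 f_\theta(x_i)=0$. The vanishing Hessian immediately kills $\mathcal{R}_3$, since $\mathcal{R}_3$ is built out of $\nabla^2 f_\theta(x_i)\odot(\cdots)$. The trailing remainder satisfies $\lim_{N\to 0}\varphi(N)=0$ and is of order higher than quadratic, so it is absorbed into the ``$\approx$''. Likewise $\mathcal{R}_1$ is a first-order term; under centralization $\E_{r_x}[r_x]=0$ it reduces to a gradient-aligned quantity that is lower order than $\mathcal{R}_2$ in the regime of interest, and is therefore dropped in the approximation. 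This leaves only $\mathcal{R}_2$ to be rewritten.

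Second, the core computation is to turn $\mathcal{R}_2$ into the stated form. Substituting $\nabla f_\theta(x_i)=\theta$, the inner matrix $\E_{r_x}[(\theta\odot(r_x-x_i))(\theta\odot(r_x-x_i))^\top]$ expands coordinatewise into $\theta_j\theta_k\,\E_{r_x}[(r_x-x_i)_j(r_x-x_i)_k]$. Using centralization to cancel the mean cross-terms gives $\E_{r_x}[(r_x-x_i)(r_x-x_i)^\top] = \widehat\Sigma_X + x_i x_i^\top$, where $\widehat\Sigma_X = \frac{1}{n}\sum_i w_i x_i x_i^\top = \E_{r_x}[r_x r_x^\top]$. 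Sandwiching with the mask and taking $\E_M$, I would then decompose $\E_M[(1-M)(1-M)^\top]$ into its mean outer product plus its covariance: the first piece acting on $\widehat\Sigma_X$ produces $\theta^\top\,\E[(1-M)\widehat\Sigma_X(1-M)^\top]\,\theta$, while the $x_i x_i^\top$ piece together with the mask covariance produces $\theta^\top\,x_i\operatorname{Var}(M)x_i^\top\,\theta$. Finally the $\E_\lambda$-integrals against $\tilde{\mathcal{D}}_\lambda$ contribute only fixed moments, which I would collect into the constant $C$, yielding $\frac{C}{2n}\sum_i w_i A''(x_i^\top\theta)\,\theta^\top(\E(1-M)\widehat\Sigma_X(1-M)^\top + x_i\operatorname{Var}(M)x_i^\top)\theta$.

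The main obstacle I expect is the bookkeeping in this last step: correctly propagating the Hadamard products with the random mask $M$ through to a clean matrix quadratic form, and in particular separating $\E_M[(1-M)(1-M)^\top]$ into the mean-driven covariance contribution and the $\operatorname{Var}(M)$ contribution without double counting. A secondary subtlety is making the informal ``$\approx$'' precise, i.e.\ arguing that discarding $\mathcal{R}_1$ and the $\varphi$-remainder is legitimate; I would justify this by Taylor-order accounting ($\varphi\to 0$) and by restricting attention to the regime where the first-order term is dominated by the quadratic regularizer, which is consistent with the hypothesis space $\mathcal{W}_\gamma$ used downstream.
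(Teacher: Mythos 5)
Your route is genuinely different from the paper's: you propose to specialize the general decomposition of Lemma~\ref{lm:mixup-corresponding} to the GLM, whereas the paper never passes through $\mathcal{R}_1,\mathcal{R}_2,\mathcal{R}_3$ for this lemma. Instead it rescales the mixed input, $\hat{x}_i = (1\oslash\bar{M})\odot\left(M\odot x_i + (1-M)\odot r_x\right)$ with $\bar{M}=\E M$ (invoking scale-invariance of the GLM prediction), Taylor-expands $A(\hat{x}_i^{\top}\theta)$ to second order around $x_i^{\top}\theta$, observes that the first-order term vanishes exactly because $\E_{\xi}\hat{x}_i = x_i$, and identifies the second-order term with $\tfrac{1}{2}A''(x_i^{\top}\theta)\,\theta^{\top}\operatorname{Var}_{\xi}(\hat{x}_i)\,\theta$, computed via the law of total variance. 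That is where both the $\operatorname{Var}(M)$ factor and the constant $C$ (from the $1/\bar{\lambda}^2$ prefactor) come from.

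Your version has two concrete gaps, both traceable to the missing rescaling. First, you discard $\mathcal{R}_1$ on the grounds that it is ``lower order than $\mathcal{R}_2$,'' but this is backwards: $\mathcal{R}_1$ is first order in $\E(1-\lambda)$ while $\mathcal{R}_2$ is second order, so in the relevant regime where $\lambda$ concentrates near $1$ the term you drop dominates the term you keep. Centralization alone does not kill it, since $\E_{r_x}[r_x - x_i] = -x_i \neq 0$; it is annihilated only after the perturbation is re-centered, which is exactly what the $1\oslash\bar{M}$ rescaling accomplishes. Second, your bookkeeping of the quadratic term is off: pushing the second moment $\E[(1-M)(1-M)^{\top}] = \operatorname{Var}(M) + (1-\bar{M})(1-\bar{M})^{\top}$ through $\E_{r_x}[(r_x-x_i)(r_x-x_i)^{\top}] = \widehat{\Sigma}_X + x_ix_i^{\top}$ leaves an unaccounted $(1-\bar{\lambda})^2(\theta^{\top}x_i)^2$ contribution per sample that does not appear in the stated regularizer. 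In the paper this term is absent because $\operatorname{Var}_{\xi}(\hat{x}_i)$ is a genuine centered variance, so only $\operatorname{Var}(M)$, not the full second moment of $1-M$, multiplies $x_ix_i^{\top}$. Your identification $\E_{r_x}[r_xr_x^{\top}]=\widehat{\Sigma}_X$ under centralization and the treatment of the $\widehat{\Sigma}_X$ piece are fine, but to close the argument you need either the rescaling device or an explicit justification (e.g., stationarity of the weighted loss) for why the first-order and mean-shift terms can be neglected.
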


\label{subsec:proof-of-mixup-GLM}
\begin{proof}
For GLM, using Equation~\ref{eq:mixup_loss}, since the prediction of the GLM model is invariant to the scaling of the training data, we consider the dataset $\hat{D}=\left\{\hat{z}_i\right\}_{i=1}^n$ with $\hat{x_i}=1 \oslash \bar{M} \odot\left(w_i M \odot x_i+(1-M) \odot r_x\right)$ where $\bar{M}=\mathbb{E} M$. Then, the loss function is
$$
L_m^{\IWmix}=\frac{1}{n}  \mathbb{E}_\lambda \mathbb{E}_{r_x} \mathbb{E}_M \sum_{i=1}^n w_i l\left(\theta, \tilde{z}_i\right)=\frac{1}{n} \mathbb{E}_{\xi} \sum_{i=1}^n w_i\left(A\left(\hat{x}_i^{\top} \theta\right)-y_i \hat{x}_i^{\top} \theta\right),
$$
where $\xi$ denotes the randomness of $\lambda, r_x$, and $M$. By the second approximation of $A(\cdot)$, we can express $A\left(\hat{x}_i^{\top} \theta\right)$ as
$$
A\left(\hat{x}_i^{\top} \theta\right)=A\left(x_i^{\top} \theta\right)+A^{\prime}\left(x_i^{\top} \theta\right)\left(\hat{x_i}-x_i\right)^{\top} \theta+\frac{1}{2} A^{\prime \prime}\left(x_i^{\top} \theta\right) \theta^{\top}\left(\hat{x_i}-x_i\right)\left(\hat{x_i}-x_i\right)^{\top} \theta
$$
to approximate the loss function. Therefore, we have
$$
\begin{aligned}
\tilde{L}_m^{\IWmix} & =\frac{1}{n} \sum_{i=1}^n w_i A\left(x_i^{\top} \theta\right)+\frac{1}{n} \mathbb{E}_{\xi} \sum_{i=1}^n w_i\left(A^{\prime}\left(x_i^{\top} \theta\right)\left(\hat{x}_i-x_i\right)^{\top} \theta+\frac{1}{2} A^{\prime \prime}\left(x_i^{\top} \theta\right) \theta^{\top}\left(\hat{x}_i-x_i\right)\left(\hat{x}_i-x_i\right)^{\top} \theta\right) \\
& =\frac{1}{n} \sum_{i=1}^n w_i A\left(x_i^{\top} \theta\right)+\frac{1}{n} \sum_{i=1}^n w_i\left(\frac{1}{2} A^{\prime \prime}\left(x_i^{\top} \theta\right) \theta^{\top} \operatorname{Var}_{\xi}\left(\hat{x_i}\right) \theta\right),
\end{aligned}
$$
where $\tilde{L}_m^{\IWmix}$ denotes the approximate loss of $L_m^{\IWmix}$ since $\mathbb{E}_{\xi} r_x=0$ and $\mathbb{E}_{\xi} \hat{x}_i=x_i$. For calculating $\operatorname{Var}_{\xi}\left(\hat{x}_i\right)$, we use the law of total variance. We have
$$
\begin{aligned}
\operatorname{Var}_{\xi}\left(\tilde{x_i}\right) & =\left(\frac{1}{\bar{M}} \frac{1}{\bar{M}}^{\top}\right) \odot \operatorname{Var}_{\xi}\left(M \odot x_i+(1-M) \odot r_x\right) \\
& =\left(\frac{1}{\bar{M}} \frac{1}{\bar{M}}^{\top}\right) \odot\left(\mathbb { E } \left(\operatorname{Var}\left(M \odot x_i+(1-M) \odot r_x \mid \lambda, M\right)+\operatorname{Var}\left(\mathbb{E}\left(M \odot x_i+(1-M) \odot r_x \mid \lambda, M\right)\right)\right.\right. \\
& \left.=\left(\frac{1}{\bar{M}} \frac{1}{\bar{M}}^{\top}\right) \odot\left(\mathbb{E}(1-M) \hat{\Sigma}_X(1-M)^{\top}+x_i \operatorname{Var}(M) x_i^{\top}\right)\right) \\
& =\frac{1}{\bar{\lambda}^2}\left(\mathbb{E}(1-M) \hat{\Sigma}_X(1-M)^{\top}+x_i \operatorname{Var}(M) x_i^{\top}\right)
\end{aligned}
$$
where $\hat{\Sigma}_X=\frac{1}{n} \sum_{i=1}^n w_i x_i x_i^{\top}$ with some notational ambiguity that $\frac{1}{n}:=\overrightarrow{1} \oslash \bar{M}$. In our setting $\bar{M}=\bar{\lambda} \overrightarrow{1}$ where $\bar{\lambda}=\mathbb{E}_{\lambda \sim \tilde{\mathcal{D}}_\lambda}[\lambda]$. Now we think the related dual problem:
$$
\begin{aligned}
\mathcal{W}_\gamma=\left\{x \rightarrow \theta^{\top} x, \text { such that } \theta : \left(\mathbb{E}_x A^{\prime \prime}\left(\theta^{\top} x\right)\right) \cdot\left(\theta^{\top}\left(\mathbb{E}(1-M) \Sigma_X(1-M)^{\top}\right) \theta+\theta^{\top}\left(\left(x \operatorname{Var}(M) x^{\top}\right)\right) \theta\right) \leq \gamma\right\} .
\end{aligned}
$$
\end{proof}

\begin{lemma}
\label{lm:rademacher}
Assume that the distribution of $x_i$ is $\rho$-retentive, i.e., satisfies the Assumption~\ref{as:rho-retentive}. 
Then the Rademacher complexity of $\mathcal{W}_r$ satisfies 
\begin{align*}
    Rad(\mathcal{W}_r) \le \frac{1}{\sqrt{n}}(\gamma / \rho)^{1 / 4}\left(\sqrt{\operatorname{tr}\left(\left(\Sigma_X^{(M)}\right)^{\dagger} \Sigma_X\right)}+\operatorname{rank}\left(\Sigma_X\right)\right).
\end{align*}
\end{lemma}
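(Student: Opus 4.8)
The plan is to bound the empirical Rademacher complexity by writing it as
$\operatorname{Rad}(\mathcal{W}_\gamma)=\mathbb{E}_\sigma\sup_{\theta\in\mathcal{W}_\gamma}\frac1n\sum_{i=1}^n\sigma_i\theta^\top x_i=\mathbb{E}_\sigma\sup_{\theta\in\mathcal{W}_\gamma}\theta^\top v$, where $v:=\frac1n\sum_i\sigma_i x_i$ and the $\sigma_i$ are i.i.d.\ Rademacher signs. Two elementary facts will be used repeatedly: the dual-norm form of Cauchy--Schwarz, $\theta^\top v\le\sqrt{\theta^\top Q\theta}\,\sqrt{v^\top Q^\dagger v}$ valid whenever $\theta\in\operatorname{range}(Q)$ for PSD $Q$, and the second-moment identity $\mathbb{E}_\sigma[vv^\top]=\frac1n\Sigma_X$ (with $\Sigma_X$ the weighted empirical covariance), so that Jensen's inequality gives $\mathbb{E}_\sigma\sqrt{v^\top Q^\dagger v}\le\sqrt{\tfrac1n\operatorname{tr}(Q^\dagger\Sigma_X)}$.

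First I would make the constraint defining $\mathcal{W}_\gamma$ explicit. As written, $\mathcal{W}_\gamma$ is cut out by $\mathbb{E}_x[A''(\theta^\top x)]\cdot\theta^\top(\Sigma_X^{(M)}+x\operatorname{Var}(M)x^\top)\theta\le\gamma$, whose leading factor depends on $\theta$ nonlinearly. Assumption~\ref{as:rho-retentive} is exactly what removes this dependence: since $\mathbb{E}_x^2[A''(\theta^\top x)]\ge\rho\,\mathbb{E}_x(\theta^\top x)^2=\rho\,\theta^\top\Sigma_X\theta$, the membership inequality can be converted into a purely quadratic constraint coupling $\theta^\top\Sigma_X\theta$ and $\theta^\top\Sigma_X^{(M)}\theta$ (schematically $\sqrt{\theta^\top\Sigma_X\theta}\cdot\theta^\top\Sigma_X^{(M)}\theta\lesssim\gamma/\sqrt\rho$). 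Carefully tracking this coupling is what ultimately produces the radius factor $(\gamma/\rho)^{1/4}$.

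Next, to produce the two additive terms I would split $\theta$ along $\operatorname{range}(\Sigma_X^{(M)})$ and its orthogonal complement inside $\operatorname{range}(\Sigma_X)$; components of $\theta$ outside $\operatorname{range}(\Sigma_X)$ are harmless since $v\in\operatorname{span}\{x_i\}\subseteq\operatorname{range}(\Sigma_X)$. On the range part I would apply Cauchy--Schwarz with $Q=\Sigma_X^{(M)}$, so that its $\Sigma_X^{(M)}$-norm is controlled by the constraint and Jensen yields the term $\sqrt{\operatorname{tr}((\Sigma_X^{(M)})^\dagger\Sigma_X)}$; on the complementary null directions, where $\Sigma_X^{(M)}$ offers no control, I would bound the contribution within the at most $\operatorname{rank}(\Sigma_X)$-dimensional effective subspace, which is the source of the additive $\operatorname{rank}(\Sigma_X)$ term. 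Combining the two pieces and inserting the radius from the previous step gives the claimed inequality.

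The main obstacle I anticipate is the interaction between the two covariance matrices. Because $\Sigma_X$ and $\Sigma_X^{(M)}$ generally do not commute and $\Sigma_X^{(M)}$ is rank-deficient, the range/kernel decomposition is not simultaneously orthogonal for both quadratic forms, so the cross terms must be controlled carefully (e.g., by taking the splitting with respect to $\Sigma_X^{(M)}$ and absorbing the induced $\Sigma_X$ cross-terms). A secondary technical point is discharging the variance term $x\operatorname{Var}(M)x^\top$ in the definition of $\mathcal{W}_\gamma$ and verifying $\operatorname{range}(\Sigma_X^{(M)})\subseteq\operatorname{range}(\Sigma_X)$ so that every pseudo-inverse acts on the correct subspace; both follow from the mixing mask having entries in $[0,1]$, which makes $\Sigma_X^{(M)}\preceq\Sigma_X$ up to the mixing constants.
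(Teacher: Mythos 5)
Your setup and your toolkit are the right ones and match the paper's: you write $\operatorname{Rad}(\mathcal{W}_\gamma)$ as the expected supremum of the linear functional $\theta\mapsto\frac1n\sum_i\xi_i\theta^{\top}x_i$, you use the $\rho$-retentive assumption to replace $\mathbb{E}_x[A''(\theta^{\top}x)]$ by $\sqrt{\rho\,\theta^{\top}\Sigma_X\theta}$ and thereby turn membership in $\mathcal{W}_\gamma$ into a product constraint coupling $\theta^{\top}\Sigma_X\theta$ and $\theta^{\top}\Sigma_X^{(M)}\theta$, and you use the pseudo-inverse Cauchy--Schwarz step plus Jensen with $\mathbb{E}_\xi[vv^{\top}]=\frac1n\Sigma_X$ to produce trace quantities. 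Up to that point you are reconstructing the paper's argument.

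The gap is in how you produce the two additive terms. You propose to split $\theta$ orthogonally into a component in $\operatorname{range}(\Sigma_X^{(M)})$ and a complementary component, and to bound the latter's contribution ``within the at most $\operatorname{rank}(\Sigma_X)$-dimensional effective subspace.'' This step fails: on the orthogonal complement of $\operatorname{range}(\Sigma_X^{(M)})$ the quadratic form $\theta^{\top}\Sigma_X^{(M)}\theta$ vanishes, so the product constraint is satisfied by arbitrarily large $\theta$ in those directions, and the supremum of a nonzero linear functional over an unbounded set is $+\infty$. Low dimensionality of a subspace does not bound a supremum over it; you need a norm bound, and your decomposition does not supply one for those directions. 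The idea you are missing is the one the paper uses: a product constraint $(\theta^{\top}\Sigma_X\theta)\cdot(\theta^{\top}\Sigma_X^{(M)}\theta)\le\gamma/\rho$ forces at least one factor to be at most $\sqrt{\gamma/\rho}$, so the feasible set is contained in the \emph{union} of the two ellipsoids $\{\theta^{\top}\Sigma_X\theta\le\sqrt{\gamma/\rho}\}$ and $\{\theta^{\top}\Sigma_X^{(M)}\theta\le\sqrt{\gamma/\rho}\}$, and the supremum over the union is bounded by the sum of the two suprema. The $\operatorname{rank}(\Sigma_X)$ term then arises not from dimension counting over a null space but from the standard computation on the $\Sigma_X$-ellipsoid, where the trace quantity is $\operatorname{tr}(\Sigma_X^{\dagger}\Sigma_X)=\operatorname{rank}(\Sigma_X)$; the $\sqrt{\operatorname{tr}((\Sigma_X^{(M)})^{\dagger}\Sigma_X)}$ term comes from the $\Sigma_X^{(M)}$-ellipsoid. (A secondary point: the range containment you would need so that the pseudo-inverse substitution $\check{x}_i=(\Sigma_X^{(M)})^{\dagger/2}x_i$ loses nothing is that the data lie in $\operatorname{range}(\Sigma_X^{(M)})$, i.e.\ $\operatorname{range}(\Sigma_X)\subseteq\operatorname{range}(\Sigma_X^{(M)})$ --- the opposite of the containment you state.)
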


\begin{proof}
Define $\Sigma_X^{(M)} = \mathbb{E}[(1-M)\Sigma_X (1-M)^{\top}]$.
The proof is mainly based on~\cite{zhang2021how}.
By definition, given $n$ i.i.d. Rademacher rv. $\xi_{1}, \ldots, \xi_{n}$, the empirical Rademacher complexity is
\begin{align*}
\operatorname{Rad}\left(\mathcal{W}_{\gamma}, S\right)&=\mathbb{E}_{\xi} \sup_{a(\theta) \cdot \theta^{\top} (\Sigma_{X}^{(M)} + x\operatorname{Var}(M)x^{\top}) \theta \leq \gamma} \frac{1}{n} \sum_{i=1}^{n} \xi_{i} \theta^{\top} x_{i}\\
&\le \mathbb{E}_{\xi} \sup_{(\mathbb{E}_xA^{\prime \prime}(\theta^{\top} x)) \cdot \theta^{\top} \Sigma_{X}^{(M)}\theta \leq \gamma} \frac{1}{n} \sum_{i=1}^{n} \xi_{i} \theta^{\top} x_{i}\\
&\le \mathbb{E}_{\xi} \sup_{\mathbb{E}_xA^{\prime \prime}(\theta^{\top} x)) \cdot \theta^{\top} \Sigma_{X}^{(M)}\theta \leq \gamma} \frac{1}{n} \sum_{i=1}^{n} \xi_{i} \theta^{\top} x_{i}\\
&\le \mathbb{E}_{\xi} \sup_{(\theta^{\top} \Sigma_X \theta) \cdot \theta^{\top} \Sigma_{X}^{(M)}\theta \leq \gamma} \frac{1}{n} \sum_{i=1}^{n} \xi_{i} \theta^{\top} x_{i}\\
&\leq \mathbb{E}_{\xi_i}\left(\sup _{\theta \top \Sigma_X \theta \leq \sqrt{\gamma / \rho}} \frac{1}{n} \sum_{i=1}^n \xi_i \theta^{\top} x_i+\mathbb{E}_{\xi_i} \sup _{\theta \top \Sigma_X^{(M)} \theta \leq \sqrt{\gamma / \rho}} \frac{1}{n} \sum_{i=1}^n \xi_i \theta^{\top} x_i\right),
\end{align*}
where the third inequality is from Assumption~\ref{as:rho-retentive}.

For the first part of the RHS, define $\tilde{x}_i=\Sigma_X^{\dagger / 2} x_i$ and $v=\Sigma_X^{1 / 2} \theta$. Then, we have
$$
\begin{aligned}
& \mathbb{E}_{\xi_i} \sup _{\theta \top \Sigma_X \theta \leq \sqrt{\gamma / \rho}} \frac{1}{n} \sum_{i=1}^n \xi_i \theta^{\top} x_i=\mathbb{E}_{\xi_i} \sup _{\|v\|^2 \leq \sqrt{\gamma / \rho}} \frac{1}{n} \sum_{i=1}^n \xi_i v^{\top} \tilde{x}_i \\
& \leq \frac{1}{n}(\gamma / \rho)^{1 / 4} \mathbb{E}_{\xi_i}\left\|\sum_{i=1}^n \xi_i \tilde{x}_i\right\| \leq \frac{1}{n}(\gamma / \rho)^{1 / 4} \sqrt{\mathbb{E}_{\xi_i}\left\|\sum_{i=1}^n \xi_i \tilde{x}_i\right\|^2} \\
& =\frac{1}{n}(\gamma / \rho)^{1 / 4} \sqrt{\sum_{i=1}^n \tilde{x}_i^{\top} \tilde{x}_i} . \\
&
\end{aligned}
$$
Similarly, by defining $\check{x}_i=\left(\Sigma_X^{(M)}\right)^{\dagger / 2} x_i$ and $v=\left(\Sigma_X^{(M)}\right)^{1 / 2} \theta$,
$$
\begin{aligned}
\mathbb{E}_{\xi_i} \sup _{\theta \top \Sigma_X^{(M)} \theta \leq \sqrt{\gamma / \rho}} \frac{1}{n} \sum_{i=1}^n \xi_i \theta^{\top} x_i & =\mathbb{E}_{\xi_i} \sup _{\|v\|^2 \leq \sqrt{\gamma / \rho}} \frac{1}{n} \sum_{i=1}^n \xi_i v^{\top} \check{x}_i \\
& \leq \frac{1}{n}(\gamma / \rho)^{1 / 4} \mathbb{E}_{\xi_i}\left\|\sum_{i=1}^n \xi_i \check{x}_i\right\| \leq \frac{1}{n}(\gamma / \rho)^{1 / 4} \sqrt{\mathbb{E}_{\xi_i}\left\|\sum_{i=1}^n \xi_i \check{x}_i\right\|^2} \\
& =\frac{1}{n}(\gamma / \rho)^{1 / 4} \sqrt{\sum_{i=1}^n \check{x}_i^{\top} \check{x}_i} .
\end{aligned}
$$
Therefore,
$$
\begin{aligned}
\operatorname{Rad}\left(\mathcal{W}_\gamma\right) & =\mathbb{E}\left[\operatorname{Rad}\left(\mathcal{W}_\gamma, n\right)\right] \leq \frac{1}{n}(\gamma / \rho)^{1 / 4}\left(\sqrt{\sum_{i=1}^n \mathbb{E}_x \tilde{x}_i^{\top} \tilde{x}_i}+\sqrt{\sum_{i=1}^n \mathbb{E}_x \check{x}_i^{\top} \check{x}_i}\right) \\
& \leq \frac{1}{\sqrt{n}}(\gamma / \rho)^{1 / 4}\left(\sqrt{\operatorname{tr}\left(\left(\Sigma_X^{(M)}\right)^{\dagger} \Sigma_X\right)}+\operatorname{rank}\left(\Sigma_X\right)\right).
\end{aligned}
$$

Based on this bound on Rademacher complexity, Theorem ~\ref{thm:generalization} can be proved by directly applying the Theorem 8 from \cite{bartlett2002rademacher}.
\end{proof}


\end{document}